\documentclass{article}

\usepackage{microtype}
\usepackage{graphicx}
\usepackage{subcaption}
\usepackage{booktabs} %

\usepackage{xcolor}
\definecolor{mydarkblue}{rgb}{0,0.08,0.45}
\usepackage[colorlinks=true, linkcolor=mydarkblue, urlcolor=mydarkblue, citecolor=mydarkblue]{hyperref}

\usepackage[accepted]{icml2026}

\usepackage{url}
\usepackage{xspace}

\usepackage{amsmath}
\usepackage{amssymb}
\usepackage{mathtools}
\usepackage{amsthm}

\usepackage{algorithm}
\usepackage{algorithmic}
\usepackage{pifont}

\usepackage{soul}
\usepackage{url}
\usepackage{xurl}

\usepackage{float}
\usepackage{thmtools}
\usepackage{thm-restate}
\usepackage{wrapfig}
\usepackage{placeins}
\usepackage{makecell}
\usepackage{cprotect}
\usepackage{adjustbox}
\usepackage{multirow}

\usepackage{epsfig}
\usepackage{epstopdf}

\usepackage[capitalize,noabbrev]{cleveref}

\theoremstyle{plain}

\theoremstyle{definition}

\theoremstyle{remark}

\renewcommand{\cite}[1]{\citep{#1}}

\usepackage{bm}
\usepackage{colortbl}
\definecolor{bestcolor}{RGB}{252, 229, 205}
\newcommand{\best}[1]{\cellcolor{bestcolor}{\textbf{#1}}}
\definecolor{secondcolor}{RGB}{243, 243, 243}
\newcommand{\second}[1]{\cellcolor{secondcolor}{\textbf{#1}}}

\definecolor{taskred}{RGB}{241, 172, 172}
\definecolor{taskyellow}{RGB}{255, 243, 185}
\definecolor{taskgreen}{RGB}{216, 235, 200}

\newcommand{\taskred}[1]{{\sethlcolor{taskred}\hl{#1}}}
\newcommand{\taskyellow}[1]{{\sethlcolor{taskyellow}\hl{#1}}}
\newcommand{\taskgreen}[1]{{\sethlcolor{taskgreen}\hl{#1}}}

\def\eqref#1{Eq.~\ref{#1}}

\newcommand{\method}{COLLIE\xspace}
\newcommand{\methodfull}{\textbf{CO}herent \textbf{L}atent-based guided ski\textbf{L}l d\textbf{I}scov\textbf{E}ry (\method)\xspace}

\begin{document}

\twocolumn[
  \icmltitle{\method: Guiding Skill Discovery in Semantically Coherent Latent Space}
  \icmlsetsymbol{equal}{*}

  \begin{icmlauthorlist}
    \icmlauthor{Yao Luan}{equal,a1}
    \icmlauthor{Ni Mu}{equal,a1}
    \icmlauthor{Hanfei Ge}{a3}
    \icmlauthor{Yiqin Yang}{a2}
    \icmlauthor{Bo Xu}{a2}
    \icmlauthor{Qing-Shan Jia}{a1,a11,a111}
  \end{icmlauthorlist}

  \icmlaffiliation{a1}{The Center for Intelligent and Networked Systems (CFINS), Department of Automation, Beijing National Research Center for Information Science and Technology, Tsinghua University, Beijing, China}
  \icmlaffiliation{a11}{Beijing Key Laboratory of Embodied Intelligence Systems, Beijing, China}
  \icmlaffiliation{a111}{Institute for Embodied Intelligence and Robotics, Tsinghua University, Beijing, China}
  \icmlaffiliation{a2}{The Key Laboratory of Cognition and Decision Intelligence for Complex Systems, Institute of Automation, Chinese Academy of Sciences, Beijing, China}
  \icmlaffiliation{a3}{Kuang Yaming College, Nanjing University, Nanjing, China}

  \icmlcorrespondingauthor{Yiqin Yang}{yiqin.yang@ia.ac.cn}
  \icmlcorrespondingauthor{Qing-Shan Jia}{jiaqs@tsinghua.edu.cn}

  \icmlkeywords{Reinforcement learning, Skill discovery}

  \vskip 0.3in
]

\printAffiliationsAndNotice{\icmlEqualContribution}

\begin{abstract}

Unsupervised skill discovery (USD) aims to learn diverse behaviors without reward functions, but often results in task-irrelevant or hazardous behaviors due to uniform exploration. 
Guided skill discovery (GSD) addresses this issue by incorporating human intent to focus exploration on meaningful regions.
However, existing GSD methods typically require training additional guidance models, and rely on pre-defined rules or expert demonstration, which can be ineffective under sparse, online-collected human feedback. 
To overcome this, we propose \method, a GSD framework that leverages dense unsupervised data to construct a semantically coherent skill latent space.
This latent space is well-structured, enabling reliable guidance with sparse online feedback. 
Moreover, its semantic coherence property enables training-free construction of guidance signals, eliminating the need for additional model training beyond skill learning. 
Theoretical analysis justifies the effectiveness of our training-free guidance signal, while experiments across diverse state-based and pixel-based tasks show that \method learns diverse, human-aligned skills, avoids hazardous behaviors, and achieves superior downstream performance with minimal human feedback. 
Code is available at \url{https://github.com/iiiiii11/COLLIE}.

\end{abstract}

\section{Introduction}

Unsupervised learning aims to learn meaningful representations or behaviors through self-supervised objectives without pre-defined task-specific goals, which has shown effectiveness across domains, such as computer vision \cite{chen2020simple,radford2021learning} and natural language processing \cite{devlin2019bert,brown2020language}.  
In reinforcement learning, \textit{unsupervised skill discovery} (USD) builds on this idea to learn diverse, distinguishable behaviors that broadly cover the state space, facilitating downstream tasks.
However, USD's uniform exploration strategy often leads to useless or harmful skills \cite{kim2023safety}, especially in complex scenarios, where vast state spaces include irrelevant or hazardous regions. 
This inefficiency not only wastes computational resources but also limits USD's practical applicability in real-world tasks.

To address the limitations of USD, \textit{guided skill discovery} (GSD) methods draw inspiration from human cognition, where humans prioritize exploring potentially useful regions, rather than uniformly covering the state space \cite{du2023guiding}. 
By incorporating external human intent, GSD focuses exploration on meaningful and safe areas. 
However, existing GSD methods often \ding{172} rely on pre-defined rules or expert demonstrations \cite{kim2023safety, kim2024dodont}, which can be challenging to obtain in complex environments, and \ding{173} require training auxiliary models to encode human intent \cite{klemsdal2021learning, kim2024dodont}, which risk overfitting with limited human feedback, leading to unreliable guidance in complex scenarios, especially when expert knowledges are unavailable in advance.

\begin{figure*}
    \centering
    \includegraphics[width=1.0\linewidth]{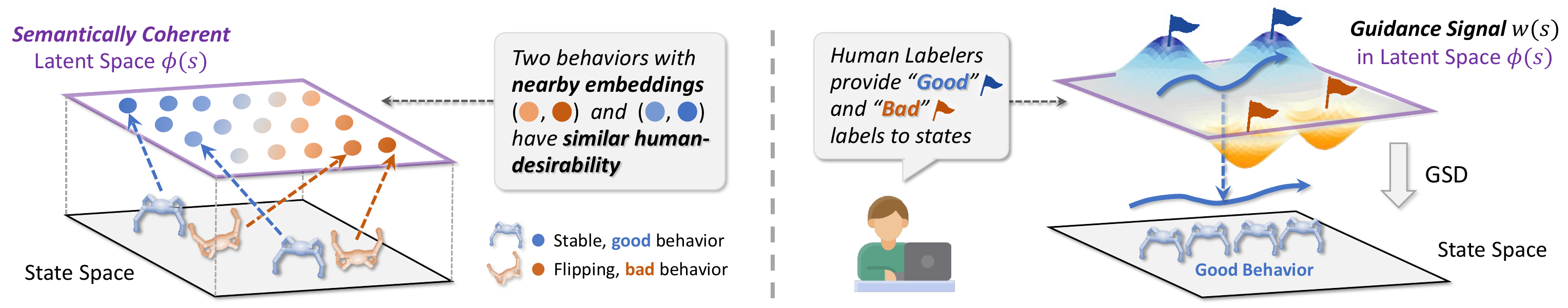}
    \caption{
    Overview of \method. 
    (1) We leverage dense unsupervised data to construct a \textbf{\textit{semantically coherent}} latent space, where states with nearby embeddings share similar human desirability.
    (2) Using sparse human ``good/bad'' labels on states, \method constructs a dense, training-free guidance signal $w(s)$.
    }
    \label{fig:framework}
\end{figure*}

To overcome these issues, we propose \methodfull, a novel GSD method that leverages dense unsupervised data to construct a \emph{semantically coherent} skill latent space for effective guidance.
In this space, nearby embeddings correspond to states with similar human desirability. 
As this property is derived from dense unlabeled data, the latent space exhibits structural information, which complements sparse human feedback and ensures the effectiveness of the guidance signal in scenarios with only non-expert datasets. 
Additionally, this coherence allows \method to generate a training-free guidance signal $w(s)$ by propagating semantics from a small set of labeled ``good'' or ``bad'' states, thereby avoiding additional model training. 
To preserve the guidance signal's accuracy, which requires labeled states to sufficiently cover the state space, \method employs an active query strategy to select under-explored behaviors for human labeling. 
By integrating the guidance signal into the intrinsic reward of USD, \method directs exploration toward desirable regions, without relying on expert data or auxiliary model training.

Both theoretical analysis and experimental results validate the effectiveness of \method. 
Theoretically, \method enables effective, training-free guidance signal construction. 
Empirically, across various complex tasks, \method learns diverse, human-aligned skills with minimal feedback, avoiding hazards, focusing on task-relevant regions, and enhancing downstream performance. 
Visualizations of learned skills further confirm that \method enables meaningful exploration. 
In summary, our contributions are threefold:
\begin{itemize}

\item We propose \method, a novel GSD framework that leverages dense unlabeled data to complement sparse human feedback. It guides skill learning within a semantically coherent latent space, thus preventing overfitting when expert demonstrations are unavailable.

\item The semantic coherence property supports a training-free GSD paradigm with theoretical guarantees, which eliminates the need for auxiliary guidance models and ensures effective guidance with sparse feedback.  

\item Extensive experiments on both state-based and pixel-based tasks show that \method learns diverse, safe, and human-aligned skills with minimal human feedback, while achieving downstream task performance approaching oracle-level guidance, demonstrating its effectiveness. 
\end{itemize}

\section{Preliminaries}
\label{sec:preliminaries}

\paragraph{Unsupervised Skill Discovery (USD).}
Unsupervised reinforcement learning considers a Markov decision process (MDP) without reward functions, which is characterized by the tuple $(\mathcal{S},\mathcal{A},\mathcal{P},\mu_0)$. Here, $\mathcal{S}$ and $\mathcal{A}$ are the state and action spaces, $\mathcal{P}: \mathcal{S}\times\mathcal{A}\rightarrow \Delta(S)$ denotes the state transition, and $\mu_0$ is the initial state distribution.

USD methods aim to acquire knowledge about the environment by learning a set of distinguishable skills that collectively cover the state space. 
This is achieved by introducing a latent skill space $\mathcal{Z}$ and training a skill-conditioned policy $\pi:\mathcal{S}\times{\mathcal{Z}}\rightarrow \Delta(\mathcal{A})$.
During training, skills are sampled from a prior distribution $p(z)$, and the agent interacts with the environment using the policy $\pi(a|s,z)$ based on the sampled skill. 
Once learned, these skills can be reused to facilitate downstream tasks, such as (1) learning a hierarchical policy that uses skill-conditioned policies as lower-level policies, or (2) selecting skills that maximize the task reward in a zero-shot manner.

\paragraph{Distance-maximizing Skill Discovery (DSD).}
A common approach to USD is to maximize the mutual information (MI) $I(s,z)$ between skills $z$ and the visited states $s$ \cite{eysenbach2019diversity}.
However, this can lead to static behaviors, as MI only ensures skill discriminability without encouraging broader state coverage \cite{park2023metra}. 
DSD methods \cite{park2023controllability, park2022lipschitz, park2023metra} address this by aligning state space distances to latent skill space distances. A typical objective is:
\begin{equation}
\begin{aligned}
    &\sup_{\pi,\phi} \mathbb{E}_{\tau,z}\!\left[\sum_{t=0}^{T-1} \left(\phi(s_{t+1}) - \phi(s_t)\right)^\top z\right] \; 
    \\ &\text{s.t.} \; \|\phi(x) - \phi(y)\|_2 \leq d(x,y), \; \forall x, y \in \mathcal{S},
    \label{eq:dsd-obj}
\end{aligned}
\end{equation}
where $\tau=(s_0,\dots,s_{T-1})$ denotes the trajectory, $d(\cdot,\cdot)$ is a distance metric in the state space.
This objective can be optimized via dual gradient descent \cite{boyd2004convex} with a Lagrange multiplier $\lambda$, and the policy is updated with intrinsic reward $r(s,z,s')=(\phi(s') - \phi(s))^\top z$.

\paragraph{Guided Skill Discovery (GSD). }

USD can be inefficient in practical scenarios, as many of the learned skills may be irrelevant or even harmful to downstream tasks \cite{kim2024dodont}. 
GSD addresses this issue by incorporating human intent to guide skill learning toward desirable behaviors while avoiding undesirable ones. 
A typical objective is:
\begin{equation}
    \begin{aligned}
        &\sup_{\pi,\phi} J_{\text{USD}}(\pi,\phi) + \lambda_{\text{guide}} \cdot {J_{\text{guide}}(\pi,\phi)}\\
        &\text{s.t.}\quad C_\text{USD}(\phi)\le 0,\quad  C_{\text{guide}}(\phi)\le \lambda_{\text{guide}}^c,
    \end{aligned}
\end{equation}
where $J_{\text{USD}}(\pi,\phi)$ represents the USD objective, such as mutual information $I(s, z)$ or the DSD objective. 
$C_\text{USD}(\phi)$ denotes constraints in USD objectives, such as those in \eqref{eq:dsd-obj}. 
$J_{\text{guide}}(\pi,\phi)$ reflects the guidance objective derived from human intent, which may include expert trajectories \cite{kim2024dodont, klemsdal2021learning} or pairwise human preferences \cite{hussonnois2023controlled, hussonnois2025human}. 
$C_\text{guide}(\phi)$ is the guidance in the form of constraints, like analytical constraint formulas for safety \cite{kim2023safety}. 
The coefficients $\lambda_{\text{guide}},\lambda_{\text{guide}}^c \geq 0$ adjust the strength of guidance.

\begin{algorithm}[t]
\caption{\textsc{\method}}
\label{alg:main}
\begin{algorithmic}[1]
\REQUIRE Feedback frequency $K$, total feedback number $N_\text{total}$, number of queries per feedback session $M$, total epoch number $T^{\text{e}}$
\STATE Initialize replay buffer $\mathcal{B}$, feedback buffer $\mathcal{D}_0$, $\mathcal{D}_1$, $\mathcal{D}_2$
\FOR{each epoch $e=1,2\dots,T^{\text{e}}$}
    \STATE Sample skill $z\sim p(z)$, rollout with policy $\pi(a|s,z)$ and store $(s,a,s')$ into $\mathcal{B}$
    \IF{epoch \% $K=0$ and $|\mathcal{D}_0|+|\mathcal{D}_1|+|\mathcal{D}_2|<N_\text{total}$}
        \STATE Select segments $\{\sigma_i\}_{i=1}^M\sim\mathcal{B}$ using the method in Section \ref{sec:method:queryselection} and Algorithm \ref{alg:query}
        \STATE Query labelers for feedback $\{y_i\}_{i=1}^M$
        \STATE Save labeled states into feedback buffer, $\mathcal{D}_y \leftarrow \mathcal{D}_y \cup \{s:s\in \sigma_i,y_i=y\}_{i=1}^M$, $y=0,1,2$
    \ENDIF
    \STATE Sample transitions from $\mathcal{B}$
    \STATE Calculate the guidance signal $w(s)$ with \eqref{eq:weight} and the smooth mechanism in Section \ref{sec:method:implementation} %
    \STATE Update the skill latent $\phi(s)$ with \eqref{eq:phi-update-ours}
    \STATE Update the Lagrange multiplier $\lambda$ with \eqref{eq:lambda-update-ours}
    \STATE Update the policy $\pi(a|s,z)$ with \eqref{eq:pi-update-ours}
    
\ENDFOR
\end{algorithmic}
\end{algorithm}

\paragraph{Human Feedback Format. }
We consider an interactive human-in-the-loop setting, where a labeler evaluates agent behaviors by providing feedback on state sequence segments $\sigma = (s_t, \dots, s_{t+H-1})$ of fixed length $H$, also referred to as ``queries'' in this paper. 
Each segment is assigned a scalar label $y \in \{0,1,2\}$, where $y = 2$ denotes a ``good'' segment (e.g., moving toward a goal), $y = 0$ denotes a ``bad'' segment (e.g., entering a hazardous area), and $y = 1$ denotes a ``neutral'' segment (e.g., neither contributing to task goals nor incurring risk).
We assume all states within a segment share the same label. 
Labeled states are stored in a dataset $\mathcal{D} = \{(s, y)\}$, which is partitioned into subsets $\mathcal{D}_0$, $\mathcal{D}_1$, and $\mathcal{D}_2$ for bad, neutral, and good states, respectively. 
Note that this feedback format differs from recent preference-based RL methods. Further discussions on this choice are detailed in Appendix \ref{app:ext-related-work}.

\section{\method: Coherent Latent-based GSD}
\label{sec:method}

To address the ineffectiveness of training additional guidance models in GSD with sparse human feedback, we propose \method, a coherent latent-based guided skill discovery method, as illustrated in Fig. \ref{fig:framework} and Algorithm \ref{alg:main}. 
\method ensures reliable guidance from limited feedback by leveraging a semantically coherent latent space. 
Specifically, Section \ref{sec:method:GSD_framework} outlines the GSD framework. 
Built upon this framework, Section \ref{sec:method:latent} defines the semantic coherence property required for the latent space $\phi(s)$ and describes its construction method. 
Section \ref{sec:method:weight} then utilizes this property to derive the guidance signal $w(s)$ in a training-free manner. 

\subsection{GSD Framework With the Guidance Signal}
\label{sec:method:GSD_framework}

To enable effective guidance under sparse feedback, we require a latent skill space trained with dense unsupervised data, which meaningfully reflects the structure of the state space. 
We therefore build upon the DSD framework (Section \ref{sec:preliminaries}), which learns a latent space $\phi(s)$ constrained to reflect state-space distances, and learns skills to maximize the distance traveled within the latent space. 
To incorporate human intent, we introduce a guidance signal $w(s): \mathcal{S} \to \mathbb{R}^+$ as a distance modifier. 
Formally, the DSD objective (\eqref{eq:dsd-obj}) is extended as:
\begin{equation}
\begin{aligned}
     &\sup_{\pi,\phi} \mathbb{E}_{\tau,z}\! \left[\sum_{t=0}^{T-1}  \left(\phi(s_{t+1}) - \phi(s_t)\right)^\top z\right] \\
    &\text{s.t.} \; \|\phi(x) - \phi(y)\|_2 \leq w(x)d(x,y), \; \forall x,y \in \mathcal{S}, 
\end{aligned}
    \label{eq:obj-distance}
\end{equation}
This formulation captures a key intuition: assigning large $w(s)$ to human-desirable states relaxes the constraint on the latent space $\phi$, allowing for broader exploration in those areas.
Conversely, small $w(s)$ in undesirable regions tightens the constraint, discouraging exploration. 
In essence, if we can construct a $w(s)$ that aligns with human intent, this framework naturally leads to human-desirable skills, as validated in prior works \cite{kim2024dodont}. 
Consequently, the complexity of GSD is reduced to constructing an effective, training-free guidance signal $w(s)$, which we discuss below.

\subsection{Semantically Coherent Latent Space}
\label{sec:method:latent}

Having established the GSD framework, we now address the prerequisites for constructing the guidance signal $w(s)$ in a training-free manner. 
This requires the latent space $\phi(s)$ to be \textit{semantically coherent}, i.e., nearby embeddings correspond to states with similar human desirability. 
Under this property, states labeled as ``good'' are mainly surrounded by other good states, while ``bad'' states are similarly surrounded by other bad states. 
Consequently, sparse human-provided labels can be effectively propagated across the latent space, allowing us to infer a dense guidance signal $w(s)$ from a few labeled states, without requiring additional model training.

We formalize the concept of \textit{semantic coherence} as follows. 
Let $g(s):\mathcal{S}\rightarrow \{0,1,2\}$ denote the human desirability for state $s$, with higher values indicating more desirable states. 
For a latent space $\mathcal{U}$ defined by  $u=\phi(s):\mathcal{S}\rightarrow \mathcal{U}$, we say $\mathcal{U}$ is semantically coherent, if $\forall \epsilon>0$, $\exists \delta>0$, such that
\begin{equation}
\begin{aligned}
&\|\phi(s_1)-\phi(s_2)\|_2\le \delta \implies \\ 
&\qquad\qquad  P[g(s_1)=g(s_2)]\ge 1-\epsilon \quad \forall s_1,s_2\in \mathcal{S}.
\end{aligned}
\label{eq:semantical_coherent}
\end{equation}
We would like to note that constructing a semantically coherent latent space is necessary, because the raw state space is typically not semantically coherent. 
For example, in robotic locomotion, a robot at an arbitrary position $(x, y)$ can be in either a stable (human-desirable) or fallen (undesirable) state. 
Although these states may be very close in the state space, e.g., differing only in joint angles or orientation, they exhibit completely different human desirability. 
In these scenarios, Euclidean distance becomes an inadequate measure of semantic similarity \cite{jiang2025episodic}. 
As directly constructing a latent space that satisfies Eq. \ref{eq:semantical_coherent} is challenging, we instead turn to a surrogate, leveraging the observation that successive states along a trajectory share similar desirability \cite{park2022surf, wang2019effective, hamedi2022context}.
This observation can be formalized as $P[g(s)=g(s')]\ge 1-\epsilon$, $\exists\epsilon>0$, $\forall (s, s') \in \mathcal{S}_\text{adj}$, where $\mathcal{S}_{\text{adj}}$ denotes the set of adjacent state pairs within trajectories. 
Consequently, to promote semantic coherence, we constrain the embeddings of such adjacent states to remain close: 
\begin{equation}
\|\phi(s')-\phi(s)\|_2 \le \delta_0, \quad \forall (s,s')\in\mathcal{S}_\text{adj}, 
\label{eq:semantic_temporal_distance}
\end{equation}
where $\delta_0>0$ is a constant. 
Then, by replacing the constraint in Eq. \ref{eq:obj-distance} with 
$\|\phi(s')-\phi(s)\|_2 \le \delta_0w(s),\forall (s,s')\in\mathcal{S}_\text{adj}$, we can derive the required semantically coherent GSD framework.
As shown in \citet{park2023metra}, this local constraint implies a global Lipschitz condition with respect to temporal distance \cite{kaelbling1993learning, hartikainen2019dynamical, durugkar2021adversarial}: 
$\|\phi(s_1) - \phi(s_2)\|\le \delta_0 d_\text{temp}(s_1,s_2), \forall s_1, s_2 \in \mathcal{S}$, 
where $d_\text{temp}(s_1, s_2)$ is the minimum number of steps to transition from $s_1$ to $s_2$. 
This result connects our semantically coherent approach to \citet{park2023metra}.

\begin{figure*}[t]
    \centering
    \includegraphics[width=1.0\linewidth]{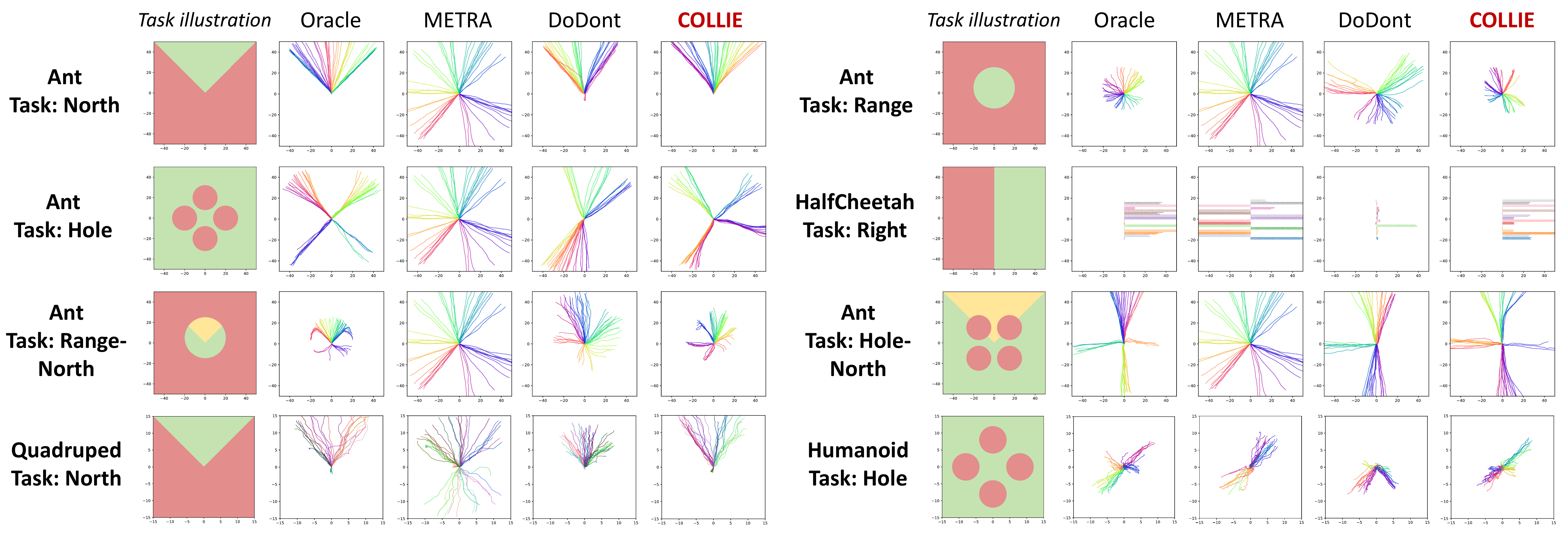}
    \caption{
    Visualizations of skills learned by \method and baseline methods, by plotting x-y (or x) trajectories sampled from the learned policies. 
    Different colors represent distinct skills $z$. 
    In task illustrations, human-undesirable regions ($y = 0$) are highlighted in \taskred{red}, desirable regions ($y = 2$) in \taskyellow{yellow}, and neutral regions in \taskgreen{green}. 
    \method effectively aligns diverse skills with human intent. 
    }
    \label{fig:main_visualization}
\end{figure*}

\subsection{Training-free Guidance Signal Construction}
\label{sec:method:weight}

Building upon the semantically coherent latent space, we now construct the guidance signal $w(s)$ in a training-free manner. 
The key idea is that for any state $s$, its human desirability can be inferred from its distances to labeled states within the semantically coherent latent space. 
Specifically, given a small set of human-labeled states $\mathcal{D} = \mathcal{D}_0 \cup \mathcal{D}_1 \cup \mathcal{D}_2$, we compute the minimum L2-distance from $s$ to each label set in the latent space: 
\begin{equation}
d_\phi(s, \mathcal{D}') = \min_{s_0 \in \mathcal{D}'} \|\phi(s_0) - \phi(s)\|, \quad \mathcal{D}' \in \{\mathcal{D}_i\}_{i=0}^2.
\end{equation}
We then define $w(s)$ as a soft assignment over the three desirability levels: 
\begin{equation}
w(s) = \text{softmax}\big( [-d_\phi(s, \mathcal{D}_i)]_{i=0}^2 \big) [0, 1, 2]^\top,
\label{eq:weight}
\end{equation}
where $\text{softmax}([x_1,\dots,x_n]) = \frac{[\exp(x_1),\dots,\exp(x_n)]}{\sum_{i=1}^n \exp(x_i)}$.
This formulation intuitively assigns higher values to states closer to ``good'' regions and lower values to those near ``bad'' regions, ensuring $w(s)$ accurately reflects relative desirability.

This construction is not only intuitive but also theoretically grounded. 
As formally established in Proposition \ref{prop:classifier} (proof in Appendix \ref{app:proof}), when the constructed $w(s)$ is interpreted as a classifier, its asymptotic error rate is bounded by twice the Bayes error rate. 
This result justifies the reliability of our training-free guidance signal $w(s)$.

\begin{restatable}{proposition}{classifier} \label{prop:classifier}
Consider the induced classifier $\hat{g}(s)$ derived from the guidance signal $w(s)$, i.e., $\hat{g}(s) = \arg\max_k \frac{\exp(-d_\phi(s, \mathcal{D}_k))}{\sum_{j=0}^2 \exp(-d_\phi(s, \mathcal{D}_j)) }$. 
The asymptotic expected error rate of the classifier $\hat{g}(s)$ is bounded by the Bayes error rate $P^*(s)$, which is formally expressed as: 
\begin{equation}
    P(\hat{g}(s) \neq g(s)) \leq 2 P^*(s) - \frac{3}{2} [P^*(s)]^2.
\end{equation}
\end{restatable}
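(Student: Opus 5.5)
The plan is to reduce the statement to the classical Cover--Hart analysis of the 1-nearest-neighbour rule in the latent space $\mathcal{U}=\phi(\mathcal{S})$ with $M=3$ classes.

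\textbf{Step 1: the induced classifier is 1-NN.} Since softmax is strictly monotone in each coordinate, $\arg\max_k \exp(-d_\phi(s,\mathcal{D}_k))/\sum_j\exp(-d_\phi(s,\mathcal{D}_j))=\arg\min_k d_\phi(s,\mathcal{D}_k)$. Because $d_\phi(s,\mathcal{D}_k)$ is the distance to the closest labeled point of class $k$, this is exactly the label of the nearest labeled state $s'_n\in\mathcal{D}$ in latent distance, ties broken arbitrarily. So $\hat g(s)=g(s'_n)$.

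\textbf{Step 2: asymptotics.} Treat the labeled states as $n$ i.i.d.\ draws from the state distribution with labels drawn from the conditional law $\eta_k(u)=P[g=k\mid\phi(s)=u]$. Semantic coherence (Eq.~\ref{eq:semantical_coherent}) plays the role of continuity of $\eta_k$ in the latent metric. Following Cover--Hart, I would first show $\|\phi(s'_n)-\phi(s)\|\to0$ almost surely as $n\to\infty$, for almost every $s$ in the support. This uses the fact that any ball of positive radius around a support point has positive mass, so with probability one some sample eventually lands in it. Coherence then gives $\eta_k(\phi(s'_n))\to\eta_k(\phi(s))$. Because the test label and the neighbour's label are conditionally independent, the conditional 1-NN error converges to
\[
r(s)=1-\sum_{k=0}^2\eta_k(s)^2 .
\]
Dominated convergence then passes the limit to the expectation.

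\textbf{Step 3: the inequality.} Write $r^*(s)=1-\max_k\eta_k(s)$. For fixed $r^*$, Cauchy--Schwarz on the remaining two classes gives
\[
\sum_k\eta_k^2\ge (1-r^*)^2+\frac{(r^*)^2}{M-1},
\]
so $r\le 2r^*-\frac{M}{M-1}(r^*)^2=2r^*-\frac32 (r^*)^2$.

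The statement is phrased with $P^*(s)$, which I read as the Bayes risk. In the pointwise version this step already finishes the proof. For the averaged version I would use $\mathrm{Var}(r^*)\ge0$, i.e.\ $E[(r^*)^2]\ge (E r^*)^2$, which yields $P(\hat g\ne g)\le 2P^*-\frac32(P^*)^2$.

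\textbf{Main obstacle.} The hard step is Step 2. Coherence is stated only probabilistically and uniformly, not as continuity of $\eta_k$, and the labeled set comes from actively chosen segments rather than i.i.d.\ samples. I would handle this by stating the i.i.d.\ and support assumptions explicitly, as in Cover--Hart. I would then use the $\epsilon$--$\delta$ form of coherence directly: once the neighbour distance falls below $\delta$, the labels of $s$ and $s'_n$ agree except with probability $\epsilon$. Since $\epsilon$ is arbitrary, the limiting error is squeezed to $1-\sum_k\eta_k^2$.
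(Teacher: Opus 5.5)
Your proposal is correct and follows the paper's proof essentially step for step. The paper also identifies $\hat g$ with the 1-NN rule by monotonicity of the softmax, passes to the Cover--Hart limit $\mathbb{E}\bigl[1-\sum_k\eta_k(s)^2\bigr]$ under i.i.d.\ sampling and an assumed continuity of $\eta_k$, and bounds this by spreading $1-\eta_{(0)}$ evenly over the other two classes. Your variance (Jensen) step for the averaged form is a tidy-up the paper skips. If you invoke coherence in place of continuity, the argument stays valid but becomes degenerate: with independent labels, coherence gives $1-\epsilon\le\sum_k\eta_k(s)\eta_k(s')\le\max_k\eta_k(s)$ for nearby $s'$, hence $P^*(s)=0$ almost everywhere (this is why your squeeze yields limiting error $0$), so the paper's plain continuity assumption is what keeps the bound meaningful.
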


\paragraph{Full Objective. }
Combining them all, we replace the constraint in Eq. \ref{eq:obj-distance} with 
$\|\phi(s')-\phi(s)\|_2 \le \delta_0w(s),\forall (s,s')\in\mathcal{S}_\text{adj}$ to maintain semantic coherence of the latent skill space, setting $\delta_0=1$ for consistency with \citet{park2023metra}. 
The constructed guidance signal $w(s)$ serves as a weighting factor for the DSD objective, leading to the following objective: 
\begin{equation}
\begin{aligned}
    &\sup_{\pi,\phi} \mathbb{E}_{\tau,z}\!\left[\sum_{t=0}^{T-1} \left(\phi(s_{t+1}) - \phi(s_t)\right)^\top z\right] \\ 
    &\text{s.t.} \;\|\phi(s') - \phi(s)\|_2 \leq w(s), \; \forall (s, s') \in \mathcal{S}_\text{adj}.
\end{aligned}
    \label{eq:obj-stair}
\end{equation}

However, directly optimizing \eqref{eq:obj-stair} is challenging, as the guidance signal $w(s)$ is embedded in the latent space constraint, and directly impacts the update of the latent $\phi(s)$. 
This coupling leads to instability, especially since our $w(s)$ is updated dynamically with incoming human feedback. 
To address this, we follow \citet{kim2024dodont} and derive a nearly equivalent but more practical objective of \eqref{eq:obj-stair}: %
\begin{equation}
\begin{aligned}
    & \sup_{\pi,\phi} \mathbb{E}_{\tau,z}\!\left[\sum_{t=0}^{T-1} w(s_t)\left(\phi(s_{t+1}) - \phi(s_t)\right)^\top z\right] \; \\
    & \text{s.t.} \; \|\phi(s') - \phi(s)\|_2 \leq 1, \; \forall (s, s') \in \mathcal{S}_\text{adj}.
\end{aligned}
    \label{eq:obj-stair-final}
\end{equation}
Specifically, this derivation is based on a variable substitution, where we replace the latent function with a scaled version, $\phi'(s) = \phi(s)/w(s)$. Appendix \ref{app:obj} provides the formal derivation.
This reformulation offers a crucial advantage: it decouples the guidance signal $w(s)$ from the DSD's latent space learning.
It preserves the stability and latent space structure of the original DSD framework, while injecting human guidance by simply scaling DSD's intrinsic reward $r(s, z, s') = w(s) (\phi(s') - \phi(s))^\top z$. 

\eqref{eq:obj-stair-final} could be optimized by updating the latent $\phi$ and the Lagrange multiplier $\lambda$ to maximize \eqref{eq:phi-update-ours} and minimize \eqref{eq:lambda-update-ours}, and updating the policy $\pi$ to maximize the accumulated intrinsic reward in \eqref{eq:pi-update-ours}:
\begin{align}
    &\mathcal{J}^\phi =  \mathbb{E}_{(s,z,s')\sim \mathcal{B}} [w(s)(\phi(s')-\phi(s))^\top z \nonumber \\ & \quad\qquad + \lambda\min(\epsilon, 1 - \|\phi(s')-\phi(s)\|_2^2)]  \label{eq:phi-update-ours} \\
    &\mathcal{J}^\lambda  = \mathbb{E}_{(s,z,s')\sim \mathcal{B}}[\lambda\min(\epsilon,1-\|\phi(s')-\phi(s)\|_2^2)]  \label{eq:lambda-update-ours} \\
   & r(s,z,s') = w(s)(\phi(s') - \phi(s))^\top z  \label{eq:pi-update-ours}
\end{align}

\subsection{Active Query Strategy}
\label{sec:method:queryselection}

The effectiveness of the training-free guidance signal, $w(s)$, relies on accurately estimating human desirability for a given state, via its nearest labeled neighbors in the latent space. 
This requires the labeled states set $\mathcal{D}$ to sufficiently cover the state space. 
To achieve this, we propose an active query strategy that prioritizes less-visited states by maximizing their state entropy in the labeled states $H_\text{state}(s) = -\log \Pr_{s\sim\mathcal{D}}(s)$. 
Since directly computing state entropy is intractable, we use an efficient particle-based entropy estimation \cite{singh2003nearest, liu2021behavior}:
\begin{equation}
    H_\text{state}(s) \approx \log\Big( 1 + \frac1k \sum_{j=1}^k \|s-s^{(j)}\| \Big),
\end{equation}
where $s^{(j)}$ denotes the $j$-th nearest neighbor of state $s$ in the labeled state dataset $\mathcal{D}$. 
We then define the query selection score $I(\sigma)$ for a segment $\sigma$ as $I(\sigma) = \sum_{s \in \sigma} H_\text{state}(s)$. 
Segments with higher $I(\sigma)$ are prioritized for human labeling. 
Further details on the query strategy are provided in Appendix \ref{app:algo}.

\begin{table*}[t]
\centering
\caption{
Safe state coverage results of \method and baselines. 
For tasks with additional “good” labels,
\ding{172} refer to \textit{composite safe coverage}, \ding{173} refer to \textit{weighted composite safe coverage}. 
The orange and gray shading represent the best and oracle performances, respectively. 
\method achieves superior performance across tasks. 
Table \ref{tab:main_safe_ratio_results} provides safe state ratio results. 
}
\label{tab:main_coverage_results}
\begin{adjustbox}{max width=0.9\textwidth}
\begin{tabular}{l|rrrrrr}
\toprule
\makecell[c]{\textbf{Method}} & 
\makecell[c]{\textbf{Ant North}} & %
\makecell[c]{\textbf{Ant Range}} & 
\makecell[c]{\textbf{Ant Hole}} & %
\makecell[c]{\textbf{HalfCheetah} \\ \textbf{Right}} & 
\makecell[c]{\textbf{Quadruped} \\ \textbf{North}} & 
\makecell[c]{\textbf{Humanoid} \\ \textbf{Hole}}  \\
\midrule
\second{Oracle} &
\second{1381.40} {\tiny $\pm$ 150.14} & \second{620.80} {\tiny $\pm$ ~ 35.52} & 
\second{1295.60} {\tiny $\pm$ 144.86} & \second{97.80} {\tiny $\pm$ ~ 4.87} & 
\second{112.60} {\tiny $\pm$ 18.05} & \second{75.20} {\tiny $\pm$ ~ 9.39} \\

DIAYN &
-4.20 {\tiny $\pm$ ~~~ 0.45} & 4.20 {\tiny $\pm$ ~~~ 0.45} &
4.20 {\tiny $\pm$ ~~~ 0.45} & 0.00 {\tiny $\pm$ ~ 0.00} &
-4.20 {\tiny $\pm$ ~ 0.84} & 3.60 {\tiny $\pm$ ~ 0.55} \\

LSD &
-1056.80 {\tiny $\pm$ 515.24} & -916.80 {\tiny $\pm$ 589.79} &
933.20 {\tiny $\pm$ 536.19} & -51.00  {\tiny $\pm$ 11.77} &
-0.60 {\tiny $\pm$ 12.72} & 4.20 {\tiny $\pm$ ~ 0.45} \\

METRA &
-1425.80 {\tiny $\pm$ 756.14} & -1247.40 {\tiny $\pm$ 147.97} &
\best{1179.00} {\tiny $\pm$ 147.26} & -8.40 {\tiny $\pm$ ~ 4.16} &
-200.80 {\tiny $\pm$ 77.72} & 21.60 {\tiny $\pm$ 10.81} \\

DDG$^*$ &
-429.20 {\tiny $\pm$ 640.30} & -539.20 {\tiny $\pm$ 477.69} &
559.40 {\tiny $\pm$ 443.59} & 10.00 {\tiny $\pm$ 14.85} &
-59.60 {\tiny $\pm$ 18.42} & 7.00 {\tiny $\pm$ ~ 1.87} \\

DoDont$^*$ &
1307.20 {\tiny $\pm$ 188.33} & -427.60 {\tiny $\pm$ 224.09} &
1132.20 {\tiny $\pm$ 171.13} & 82.80 {\tiny $\pm$ ~ 9.60} &
115.20 {\tiny $\pm$  41.60} & 75.80 {\tiny $\pm$ 14.45} \\

\method &
\best{1333.20} {\tiny $\pm$ 129.10} & \best{362.20} {\tiny $\pm$ ~ 94.55} &
1149.20 {\tiny $\pm$ 127.05} & \best{102.20}  {\tiny $\pm$ ~ 4.32} &
\best{128.40} {\tiny $\pm$ 44.60} & \best{80.60} {\tiny $\pm$ 25.01} \\

\midrule

\textbf{Method} & 
\makecell[c]{\textbf{Ant Range-} \\ \textbf{North \ding{172}}} & 
\makecell[c]{\textbf{Ant Range-} \\ \textbf{North \ding{173}}} & 
\makecell[c]{\textbf{Ant Hole-} \\ \textbf{North \ding{172}}} &
\makecell[c]{\textbf{Ant Hole-} \\ \textbf{North \ding{173}}} &
\makecell[c]{\textbf{HalfCheetah} \\ \textbf{Not-Flip}} & 
\makecell[c]{\textbf{Safety-Gym} \\ \textbf{Hazard}} \\
\midrule
\second{Oracle} & 
\second{501.40} {\tiny $\pm$ ~ 45.77} & \second{842.60} {\tiny $\pm$ ~ 44.34} & 
\second{1143.00} {\tiny $\pm$ ~ 77.27} & \second{1771.60} {\tiny $\pm$ 213.72} & 
\second{209.60} {\tiny $\pm$ 14.03} & \second{-20.80} {\tiny $\pm$ ~ 7.56} \\

DIAYN & 
4.20 {\tiny $\pm$ ~~~ 0.45} & 4.20 {\tiny $\pm$ ~~~ 0.45} & 
4.20 {\tiny $\pm$ ~~~ 0.45} & 4.20 {\tiny $\pm$ ~~~ 0.45} & 
8.80 {\tiny $\pm$ 14.10} & -34.80 {\tiny $\pm$ 11.45} \\

LSD & 
-916.80 {\tiny $\pm$ 589.79} & -762.40 {\tiny $\pm$ 508.72} & 
925.20 {\tiny $\pm$ 463.76} & 1244.20 {\tiny $\pm$ 651.89} & 
71.20 {\tiny $\pm$ 15.61} & -42.80 {\tiny $\pm$ 12.46} \\

METRA & 
-1247.40 {\tiny $\pm$ 147.97} & -1095.00 {\tiny $\pm$ 249.60} & 
\best{1219.00} {\tiny $\pm$ 133.38} & 1582.00 {\tiny $\pm$ 171.94} & 
187.20 {\tiny $\pm$ 10.59} & -34.80 {\tiny $\pm$ 14.80} \\

DDG$^*$ & 
-583.60 {\tiny $\pm$ 490.33} & -482.20 {\tiny $\pm$ 432.05} &
512.60 {\tiny $\pm$ 457.37} & 765.80 {\tiny $\pm$ 664.16} &
138.20 {\tiny $\pm$ 14.45} & -26.40 {\tiny $\pm$ 13.14} \\

DoDont$^*$ & 
-290.20 {\tiny $\pm$ 241.79} & -10.00 {\tiny $\pm$ 259.25} &
1135.40 {\tiny $\pm$ 294.13} & 1566.60 {\tiny $\pm$ 414.17} & 
195.00 {\tiny $\pm$ 13.55} & -37.60 {\tiny $\pm$ 13.22} \\

\method & 
\best{380.40} {\tiny $\pm$ 145.54} & \best{622.00} {\tiny $\pm$ 199.28} & 
{1122.00} {\tiny $\pm$ 190.43} & \best{1769.80} {\tiny $\pm$ 218.95} & 
\best{215.60} {\tiny $\pm$ ~ 3.05} & \best{-16.00} {\tiny $\pm$ 10.68} \\

\bottomrule

\end{tabular}
\end{adjustbox}

\end{table*}

\subsection{Implementation Details}
\label{sec:method:implementation}

\paragraph{Algorithm outline.}
Algorithm \ref{alg:main} and Fig. \ref{fig:framework} outline the procedure of \method. 
Building on the DSD framework, we iteratively collect feedback from the learned skills (lines 4$\sim$8) and immediately employ the feedback to construct the guidance signal (line 10), enabling the guidance signal to be updated online with the skills. 

\paragraph{Computational efficiency. }
Although the computation of the guidance signal $w(s)$ involves calculating distances $d_\phi(s, \mathcal{D})$ over all labeled states, the computational burden is not heavy. 
This is because the number of labeled states is small, and we cache the embeddings $\phi$ of these states to further accelerate the process.

\paragraph{Guidance signal smoothing.}  
The training-free construction of $w(s)$ enables efficient guidance, but suffers from abrupt changes after each feedback session. 
This particularly impacts early-stage skill learning, when both the latent space and policy are underdeveloped. 
To mitigate this, we employ a smoothing mechanism: 
$w_e(s) = (1 - \beta_e) \cdot w(s) + \beta_e \cdot 1, \quad \beta_e = \max(0, 1 - k_\beta \cdot \frac{e}{T^\text{e}})$, 
where $k_\beta$ is a hyperparameter controlling the decay of $\beta_e$, $e$ is the skill learning epoch index ($e = 1, 2, \dots, T^\text{e}$), and $T^\text{e}$ is the total epoch number. During experiments, we use $w_e(s)$ as the guidance signal in epoch $e$.
This introduces a smooth transition from pure exploration to guided exploration, with a smaller $k_\beta$ resulting in a slower transition. 
At $k_\beta = \infty$, the algorithm remains pure GSD, while $k_\beta = 0$ corresponds to pure USD. 
By gradually introducing the guidance signal, this approach improves the stability of skill learning, as validated in the ablation studies in Section \ref{exp:ablation}.

\section{Experiment}

We conduct extensive experiments to answer the following questions:
\textit{Q1}: Can a training-free guidance signal, constructed from minimal human labels, reliably promote diverse behaviors toward safe and meaningful regions? 
\textit{Q2}: Is \method effective in complex pixel-based settings, where latent state representations must be inferred from raw pixels? 
\textit{Q3}: What is the individual contribution of each proposed technique in \method?

\begin{table*}[t]
\centering
\caption{Zero-shot downstream task performance on Ant tasks. 
We report the average and the best performance of the learned skills. 
}
\label{tab:zero_shot}
\begin{adjustbox}{max width=0.9\textwidth}
\begin{tabular}{l|rrr|rrr}
\toprule
Method & Hole (avg) & North (avg) & Range (avg) & Hole (best) & North (best) & Range (best) \\
\midrule
\second{Oracle}   &
\second{938.23} {\tiny $\pm$ 201.76} & \second{1111.08} {\tiny $\pm$ ~ 257.30} & 
\second{-595.66} {\tiny $\pm$ 801.53} & \second{1165.42} {\tiny $\pm$ ~ 89.36} & 
\second{1521.60} {\tiny $\pm$ ~~~ 91.41} & \second{717.70} {\tiny $\pm$ ~ 28.89} \\

DIAYN    &
209.85 {\tiny $\pm$ ~~~ 5.78}  & -2833.11 {\tiny $\pm$ 1539.28} & 
\best{210.09} {\tiny $\pm$ ~~~ 5.86}  & 243.62 {\tiny $\pm$ ~ 17.53} &
215.07 {\tiny $\pm$ ~~~~~ 1.37}  & 244.73 {\tiny $\pm$ ~ 19.49} \\

LSD      &
-34.05 {\tiny $\pm$ 777.39} & -2017.02 {\tiny $\pm$ 1493.83} & 
-894.28 {\tiny $\pm$ 455.99} & 1112.01 {\tiny $\pm$ 386.15} & 
1054.53 {\tiny $\pm$ ~ 467.55} & 474.65 {\tiny $\pm$ 180.11} \\

METRA    &
224.07 {\tiny $\pm$ 555.11} & -1897.47 {\tiny $\pm$ 1431.75} & 
-1149.32 {\tiny $\pm$ 275.56} & 1193.03 {\tiny $\pm$ ~ 30.62} &
1249.26 {\tiny $\pm$ ~ 104.01} & 265.29 {\tiny $\pm$ 474.71} \\

DDG$^*$   &
43.78 {\tiny $\pm$ 395.34} & -2123.80 {\tiny $\pm$ 1078.94} & 
-717.89 {\tiny $\pm$ 582.27} & 923.51 {\tiny $\pm$ 321.67} &
349.83 {\tiny $\pm$ 1523.92} & 489.06 {\tiny $\pm$ 111.92} \\

DoDont$^*$   &
333.40   {\tiny $\pm$ 633.35} & 277.15 {\tiny $\pm$ 1519.21} & 
-1045.95 {\tiny $\pm$ 374.46} & 1155.58 {\tiny $\pm$ ~ 48.08} & 
1397.12 {\tiny $\pm$ ~~~ 45.58} & 540.25 {\tiny $\pm$ ~ 62.46} \\

\method  &
\best{650.67} {\tiny $\pm$ 637.03} & \best{1054.25} {\tiny $\pm$ ~ 376.79} &
-913.48 {\tiny $\pm$ 715.64} & \best{1251.92} {\tiny $\pm$ ~ 72.67} &
\best{1460.44} {\tiny $\pm$ ~~~ 53.12} & \best{673.55} {\tiny $\pm$ ~ 65.86} \\

\bottomrule
\end{tabular}
\end{adjustbox}
\end{table*}

\subsection{Setup}
\label{sec:exp:setup}

\paragraph{Domains and guidance tasks.} 
We evaluate \method on five complex robotic locomotion environments: state-based Ant and HalfCheetah from OpenAI Gym \cite{todorov2012mujoco,brockman2016openai}, pixel-based Quadruped and Humanoid from DMControl \cite{tassa2018dmcontrol}, and state-based Safety Gym \cite{ray2019benchmarking}.
To assess \method's alignment with human intent, we design tasks with varying guidance types: 
\textbf{(1) Direction}, where the agent moves towards a specific direction (e.g., \textit{North} and \textit{Right}). 
\textbf{(2) Range}, where the agent explores within a range (\textit{Range}). 
\textbf{(3) Hazard avoidance}, where the agent avoids hazardous areas (\textit{Hole} and \textit{Hazard}). 
\textbf{(4) Unsafe behaviour avoidance}, where the agent avoids unsafe actions (\textit{Not-Flip}). 
\textbf{(5) Composite tasks}, which combine multiple guidance types, requiring hazard avoidance while encouraging directional movement (\textit{Range-North} and \textit{Hole-North}). 
These tasks are illustrated in Fig. \ref{fig:main_visualization}, with further details in Appendix \ref{app:exp:setup}.

\paragraph{Feedback collection.}
Following prior works \cite{lee2021pebble}, we use an oracle teacher for systematic evaluation. 
The teacher provides feedback based on human-defined task rules, aligning with human intent. 
A segment is labeled as ``bad'' if it contains any undesirable state, ``good'' if all states are desirable, and ``neutral'' otherwise. 
This conservative labeling reflects human preferences for safety, disfavoring even brief entries into hazardous regions. 
Appendix \ref{app:implement} provides further details.

\paragraph{Baselines and implementation.}
We compare \method with three groups of baselines: 
\textbf{(1) USD methods}, including a mutual information-based method, DIAYN \cite{eysenbach2019diversity}, and two DSD methods, LSD \cite{park2022lipschitz} and METRA \cite{park2023metra}. 
\textbf{(2) GSD method}, specifically, online variant of DoDont \cite{kim2024dodont} (DoDont$^*$) and online variant of DDG \cite{klemsdal2021learning} (DDG$^*$). 
Since DoDont and DDG require training an instruction network with pre-collected expert data, which is unavailable in our setting, we use online-collected data similar to \method. 
\textbf{(3) an Oracle version of \method} (Oracle), which employs a manually designed $w(s)$ in \method to provide ideal guidance signals, serving as the performance upper bound. 
For constructing the guidance signal, \method uses 40 labeled segments of length $H=20$ for most tasks.
Appendix \ref{app:implement} provides further details.

\paragraph{Metrics.}
We employ three main metrics for evaluation: 
\textbf{(1) Safe state coverage}, which measures the agent's ability to explore the state space while avoiding hazardous regions and behaviors. 
Following \cite{kim2024dodont}, this metric assigns a value $+1, -1$ to safe and unsafe states, and computes state coverage by counting the unique 1$\times$1 x-y bins (or 1-unit x-axis bins for HalfCheetah tasks) visited by the agent. 
\textbf{(2) Safe state ratio}, which quantifies the proportion of visited safe bins among all visited bins, serves as a normalized safe state coverage. 
\textbf{(3) Downstream task performance}, which evaluates the utility of learned skills in downstream tasks. 
For all metrics, we report the average and standard deviation across 5 random seeds.
Appendix \ref{app:implement} provides more details.

\subsection{Main Results} \label{exp:main-results}

\paragraph{Hazard area avoidance with sparse feedback.}
We first assess \method's ability to avoid static hazardous areas in state-based Ant and HalfCheetah environments. 
As shown in Fig.~\ref{fig:main_visualization}, \method successfully learns skills constrained to safe regions in Ant North, Range, Hole, and HalfCheetah Right tasks, while unsupervised baselines explore indiscriminately, often visiting hazardous areas. 
Tables \ref{tab:main_coverage_results} and \ref{tab:main_safe_ratio_results} quantitatively confirm this, with \method achieving near-oracle performance in safe state coverage and safe state ratio, surpassing all baselines on most tasks. 
These results demonstrate that our training-free guidance signal provides a robust and effective safety constraint. 
In contrast, DoDont$^*$, which depends on a trained instruction network, performs worse, likely due to the network's instability with limited feedback data.

\paragraph{Enhanced exploration efficiency with additional ``good'' labels.}
Beyond merely avoiding bad regions, we evaluate \method in tasks that incorporate both positive (``good'') and negative (``bad'') feedback labels, specifically in Ant Range-North and Hole-North tasks. 
This allows for assessing \method's ability to not only avoid hazards but also actively promote exploration toward human-preferred regions. 
To quantify this, we extend the safe state coverage metric to \textit{composite safe coverage} and \textit{weighted composite safe coverage}, which assign $+1, +1, -1$ and $+2, +1, -1$ to good, neutral, and bad regions, respectively.
As shown in Tables \ref{tab:main_coverage_results}, \method surpasses almost all baselines in these tasks.
Visualizations in Fig. \ref{fig:main_visualization} further show dense and uniform skill trajectory coverage within ``good'' regions, indicating \method's flexibility as a unified framework for encouraging desired behaviors and while avoiding undesirable ones.

\begin{table}[t]
\centering
\caption{
Downstream task performance on the HalfCheetah task. 
We train a hierarchical controller to select low-level frozen skills, and report the average performance. 
}
\label{tab:downstream}
\begin{adjustbox}{max width=0.22\textwidth}
\begin{tabular}{l|r}
\toprule
Method & \makecell[c]{Performance}  \\
\midrule

\second{Oracle}   &
\second{47.46} {\tiny $\pm$ 33.27} \\
DIAYN    &
10.43 {\tiny $\pm$ ~ 4.61}  \\
LSD      &
32.73 {\tiny $\pm$ 31.94}  \\
METRA    &
21.58 {\tiny $\pm$ 25.53}  \\
DDG$^*$  &
21.57 {\tiny $\pm$ 45.27}  \\
DoDont$^*$   &
30.44 {\tiny $\pm$ 37.84}  \\
\method  &
\best{45.26} {\tiny $\pm$ 15.78} \\

\bottomrule
\end{tabular}
\end{adjustbox}
\end{table}

\paragraph{Unsafe behavior avoidance.}
Beyond avoiding static hazardous areas, we further assess \method's capability to prevent dynamic unsafe behaviors. 
In the HalfCheetah Not-Flip task, the agent must learn diverse locomotion skills while avoiding potentially damaging flipping behaviors. 
As shown in Table \ref{tab:main_coverage_results}, \method achieves the highest safe state coverage. 
This result shows that \method can effectively avoid not just static hazards but also dynamic undesirable behaviors. 

\paragraph{Effectiveness on pixel-based tasks.}
To assess \method's ability in complex environments, we evaluate \method on pixel-based Quadruped and Humanoid environments, each with 100 feedback. 
As shown in Fig. \ref{fig:main_visualization},  \method successfully avoids hazards in both the Quadruped North and Humanoid Hole tasks, despite the high-dimensional states. 
Quantitative results in Tables \ref{tab:main_coverage_results} and \ref{tab:main_safe_ratio_results} align with the visualizations. 
These results show that the \method's training-free guidance mechanism effectively leverages the coherent latent space, generalizing beyond state-based inputs.

\subsection{Downstream Task Performance} \label{exp:downstream}

We evaluate the utility of skills learned by \method on downstream tasks, both in zero-shot settings and after task-specific hierarchical control. 
We design two types of tasks: an Ant motion task with safety penalties for entering hazardous regions, and a HalfCheetah goal-reaching task that penalizes unsafe behaviors such as flipping, with details provided in Appendix \ref{app:downstream}. 

In the Ant task, we evaluate all skills in a zero-shot manner. 
As shown in Table \ref{tab:zero_shot}, \method achieves the highest average and best performance across all task variants, demonstrating that the learned skills effectively avoid undesirable states while retaining high mobility and task relevance. 
For the HalfCheetah task, we train a high-level controller to select from the frozen skill set, detailed in Appendix \ref{app:implement}. 
Table \ref{tab:downstream} shows that \method outperforms all baselines, confirming that the learned skills are effective for hierarchical downstream task solving. 
These results demonstrate that \method acquires semantically meaningful and useful skills, enabling strong downstream performance.

\subsection{Ablation Study}
\label{exp:ablation}

\paragraph{Evaluation under noisy feedback. }
Practical human feedback often contains noise. To demonstrate the robustness of \method to noisy feedback, we evaluate \method under noisy labeling conditions. To simulate human labeling errors, we randomly assign labels (neutral or bad) to states within a band of width $R_\text{error}$ around the safety boundaries, reflecting potential human uncertainty in these regions. 
$R_\text{error}=0$ means that there is no labeling noise.
As shown in the Table \ref{tab:re:noise} and Fig. \ref{fig:re:noise}, \method remains robust under noisy labels, indicating the reliability of our training-free guidance mechanism.

\begin{table}[t] 
\centering
\caption{Safe state coverage results of \method under noisy labels with different levels of noise. }
\label{tab:re:noise}
\begin{tabular}{lrr}
\toprule
$R_\text{error}$ & Ant North & Ant Range \\
\midrule
~~ 0 & 1333.20 {\tiny $\pm$ 129.10} & 362.20 {\tiny $\pm$ 94.55} \\
~~ 0.5 & 1184.60 {\tiny $\pm$ 124.84} & 369.40 {\tiny $\pm$ 51.74} \\
~~ 1 & 1084.20 {\tiny $\pm$ 135.81} & 360.80 {\tiny $\pm$ 43.53} \\
\bottomrule
\end{tabular}
\end{table}

\paragraph{Ablation of feedback number $N_\text{total}$. }
To evaluate the impact of sample sparsity on the performance of \method, we evaluate \method with different numbers of samples. As shown in the Table \ref{tab:re:feedback_num} and Fig. \ref{fig:re:feedback_num}, \method effectively aligns with human intent even with only 10 or 20 labels, while its performance improves as the number of labels increases. 

\begin{table}[t] 
\centering
\caption{Safe state coverage results of \method using varying number of feedback labels. }
\label{tab:re:feedback_num}
\begin{tabular}{crr}
\toprule
\# of labels & Ant North & Ant Hole \\
\midrule
40 & 1333.20 {\tiny $\pm$ 129.10} & 1149.20 {\tiny $\pm$ 127.05} \\
20 & 1035.60 {\tiny $\pm$ 591.51} & 829.60 {\tiny $\pm$ 243.73} \\
10 & 801.40 {\tiny $\pm$ 654.11}  & 766.20 {\tiny $\pm$ 220.39} \\
\bottomrule
\end{tabular}
\end{table}

\paragraph{Ablation on the semantically coherent latent space. }
To demonstrate the importance of the semantic coherence property in \method, we compare \method with its variant (\method-L2), which uses the Euclidean distance between raw states as the distance constraint in the DSD framework, i.e. $d(x,y)=\|x-y\|_2$ in \eqref{eq:dsd-obj}. 
As shown in the Table \ref{tab:re:euclidean_distance}, using Euclidean distance performs worse than \method, which uses the semantically coherent latent space. 
This highlights the importance of semantically coherent latent representations.

\begin{table}[t]
  \centering
  \caption{Safe state coverage results of \method using different distances as constraints. }
  \label{tab:re:euclidean_distance}
  \begin{tabular}{lcc}
    \toprule
    & Ant North & HalfCheetah Not-Flip \\
    \midrule
    \method & 1333.20 {\tiny $\pm$ 129.10} & 215.60 {\tiny $\pm$ ~ 3.05} \\
    \method-L2 & ~ 657.60 {\tiny $\pm$ 432.54} & 104.20 {\tiny $\pm$ 21.51} \\
    \bottomrule
  \end{tabular}
\end{table}

\paragraph{Other ablations. }
We also conduct ablation studies on query strategy, segment length $H$, and smoothing parameter $k_\beta$, as detailed in Appendix \ref{app:exp-more}.
These ablations demonstrate the superiority of our query strategy, the robustness to variations in segment length, and the critical role of smoothing.
\section{Related Work}

\paragraph{Unsupervised skill discovery (USD).}
Unsupervised skill discovery (USD) aims to learn a set of distinguishable policies that collectively cover the state space using unlabeled data, without task-specific rewards, to facilitate downstream tasks.
A common USD objective is maximizing the mutual information (MI) $I(s,z)$ between states $s$ and latent skills $z$ \cite{eysenbach2019diversity, sharma2020dynamics, liu2021aps, laskin2022cic}. 
Though it can yield diverse behaviors, maximizing MI often fails to promote broad exploration, leading to static behaviors \cite{park2022lipschitz, park2023metra}.

To address this, distance-maximizing skill discovery (DSD) methods are introduced, which establish a connection between latent-space distances and state-space distances to enhance state coverage.
METRA \cite{park2023metra} formally derives the DSD objective by replacing MI with the Wasserstein dependency measure.
DSD allows any distance function $d(\cdot,\cdot):\mathcal{S}\times\mathcal{S}\rightarrow \mathbb{R}_0^+$ to encourage exploration of different state sub-spaces. 
Examples include Euclidean distance to encourage geometrically longer travel \cite{park2022lipschitz}, negative log-likelihoods of an estimated transition probability to prioritize rarely visited states \cite{park2023controllability}, and temporal distance to encourage temporally distant exploration \cite{park2023metra}. 
However, these methods often lead to uniform exploration within sub-spaces, which may result in unnecessary or even unsafe exploration.

\paragraph{Guided skill discovery (GSD).}
Recent works in GSD incorporate prior knowledge to reduce unnecessary exploration in USD, leveraging expert trajectories \cite{klemsdal2021learning, kim2024dodont} or analytical constraint formulas \cite{kim2023safety}. 
Specifically, \citet{klemsdal2021learning} and DoDont \cite{kim2024dodont} train classifiers to distinguish expert trajectories from others.
\citet{klemsdal2021learning} further uses the classifier's encoder as a state projection to encourage exploring the expert-concerned state subspace, while DoDont uses the classifier's probability output as the distance function in DSD. 
\citet{kim2023safety} employs Lagrangian Q-learning to ensure skill safety.
Moreover, recent studies explore utilizing pairwise human preferences \cite{hussonnois2023controlled, hussonnois2025human} to learn a human-aligned reward model.
These models then guide the skill discovery process by identifying preferred regions \cite{hussonnois2023controlled} or encouraging alignment between skills and human values \cite{hussonnois2025human}.
Despite these advancements, deriving expert trajectories or constraint formulas is often challenging and impractical in complex tasks, and classifiers or reward models trained on limited data can be unstable, especially when expert knowledge is unavailable.
This paper aims to address these limitations.

\section{Conclusion}

In this paper, we propose \method, a training-free guided skill discovery framework that effectively aligns exploration with human intent using sparse feedback in scenarios without a high-quality pre-collected dataset. 
By enforcing a semantically coherent skill latent space with dense unsupervised data, \method constructs a dense guidance signal from minimal human feedback, and integrates this guidance signal into DSD objectives, eliminating the need for expert demonstrations or auxiliary model training. 
Also, as the semantic coherence property is derived from dense unsupervised data, the latent space is well-structured, ensuring effectiveness even when relying on only sparse human feedback.
\method further employs an active query strategy to ensure the guidance signal's accuracy. 
Experiments show that \method learns diverse, human-preferred skills, avoids unsafe behaviors, and facilitates downstream tasks. 
\section*{Acknowledgements}

This work is supported by NSFC (No. 62125304), the National Key Research and Development Program of China (2022YFA1004600), the 111 International Collaboration Project (B25027), the Beijing Natural Science Foundation (L233005), and BNRist project (BNR2024TD03003).

\section*{Impact Statement}
This paper presents work that aims to advance the field of machine learning, specifically in unsupervised skill discovery. 
Our proposed method, \method, enables more efficient alignment of AI behaviors with human intentions while reducing the requirement for expert knowledge and avoiding hazardous exploration.
By enabling training-free guidance from sparse human labels, our approach lowers barriers for developing robotic systems in complex real-world environments. This could accelerate the deployment of AI systems in industrial domains, especially where safety is important, such as assistive robotics, autonomous vehicles, and industrial automation, ultimately benefiting society through more reliable and trustworthy AI applications. 

\bibliography{ref/refs_ours,ref/refs_others,ref/refs_pbrl,ref/refs_url}
\bibliographystyle{icml2026}

\newpage
\appendix
\onecolumn

\section{Algorithm}
\label{app:algo}

\begin{figure}[ht]
    \centering
    \includegraphics[width=0.75\linewidth]{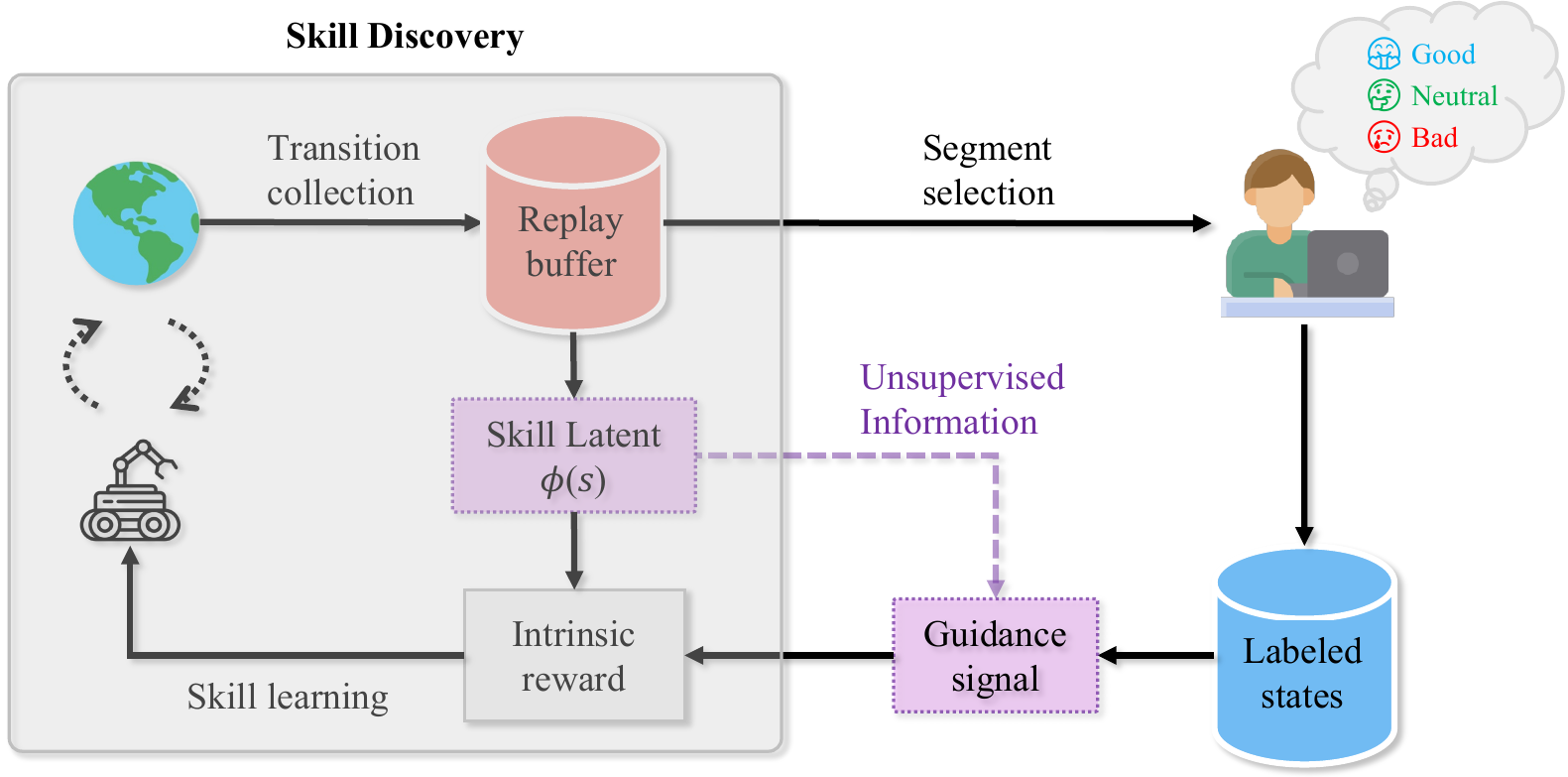}
    \caption{
    An overview of \method. 
    \method performs skill discovery and guidance signal construction simultaneously.
    The guidance signal is constructed by leveraging both rich unsupervised information from the learned skill latent space and human feedback on labeled states, without relying on expert-level data.
    }
    \label{fig:flow}
\end{figure}

We illustrate the full process of \method in Algorithm \ref{alg:main} and Fig. \ref{fig:flow}, and illustrate the query selection strategy in detail in Algorithm \ref{alg:query}.

\begin{algorithm}[ht]
\caption{\textsc{Query Selection}}
\label{alg:query}
\begin{algorithmic}[1]
\REQUIRE Number of candidate queries $N_\text{c}$, number of queries per feedback session $M$, feedback buffer $\mathcal{D}$
\IF{Feedback buffer $\mathcal{D}$ is empty}
    \STATE Sample $M$ segments $\{\sigma_i\}_{i=1}^{N_\text{c}}$ and return
\ELSE
    \STATE Sample $N_\text{c}$ segments $\{\sigma_i\}_{i=1}^{N_\text{c}}$
    \STATE Initialize query selection vector of shape $N_\text{c}$ with zeros: $\hat{I}=[0,0,\dots,0]$.
    \FOR{each segment $\sigma_i$} 
        \STATE Calculate selection score and store it in $\hat{I}$: $\hat{I}_i\leftarrow I(\sigma_i)$
    \ENDFOR
    \STATE Select $M$ queries with the top-$M$  query selection score $\hat{I}$
\ENDIF
\end{algorithmic}
\end{algorithm}

This online update, shown in Algorithm \ref{alg:main} and Fig. \ref{fig:flow}, differentiates our method from existing GSD approaches by using guidance derived from expert trajectories or analytical constraints.
The online update allows the guidance signal to evolve jointly with the skills, thereby leveraging the unsupervised skill latent space and the online collected trajectories.
Specifically, the unsupervised skill latent space is well-structured, which enables training-free construction of the guidance signal and ensures reliability with sparse human feedback.
Additionally, trajectories collected alongside the skill discovery enable interactive query collection, eliminating the need for expert trajectories to train the guidance model, which may fail when the selected trajectories are of low quality.

\newpage
\section{Proof}
\label{app:proof}

In this section, we theoretically analyze the performance of the proposed guidance signal $w(s)$. Since $w(s)$ is derived as an expectation in \eqref{eq:weight}, we evaluate it by analyzing the estimated human desirability distribution, given as:  
\begin{equation}
    \text{softmax}\left( [-d_\phi(s, \mathcal{D}_0),\; -d_\phi(s, \mathcal{D}_1),\; -d_\phi(s, \mathcal{D}_2)]\right).
\end{equation}
Denote the labeled dataset as $\mathcal{D} = \{(s_i, y_i)\}_{i=1}^n$, where $y_i \in \{0,1,2\}$ represents the human desirability label.
For each class $k$, let $\mathcal{D}_k = \{(s_i, y_i) \in \mathcal{D} \mid y_i = k\}$.
Let the true human desirability function $g(s): \mathcal{S} \rightarrow \{0,1,2\}$ be a random variable, with conditional probabilities $\eta_k(s) = P(g(s) = k \mid s)$ for $k \in \{0,1,2\}$. We consider a classifier $\hat{g}(s)$ induced from $w(s)$. For any given $s$, $\hat{g}(s)$ calculates the distances $d_{\phi,k}(s) = \min_{(s_i, y_i) \in \mathcal{D}_k} \|\phi(s_i) - \phi(s)\|$, estimates the probabilities as  
$\hat{\eta}_k(s) = \frac{\exp(-d_{\phi,k}(s))}{\sum_{j=0}^2 \exp(-d_{\phi,j}(s))}$, 
and assigns the state $s$ to the class $\hat{g}(s) = \arg\max_k \hat{\eta}_k(s)$.
The Bayes error rate of this classification problem is defined as $P^*(s) = 1 - \max_k \eta_k(s)$.

In addition, we make the following assumptions:
\begin{itemize}
    \item The labeled dataset $\mathcal{D} = \{(s_i, y_i)\}_{i=1}^n$ is sampled independently and identically distribute.  
    \item The number of samples in each class, $n_k = |\mathcal{D}_k|$, satisfies $n_k \to \infty$ as $n \to \infty$.
\end{itemize}

\classifier*

\begin{proof}
    Consider the nearest neighbor classifier $\hat{g}_{\text{NN}}(s)$, which selects the label $y^*$ of the labeled point $s^*$ closest to $s$:
    \begin{equation}
        s^* = \arg\min_{s_i \in \mathcal{D}} \|\phi(s_i) - \phi(s)\|, \quad \hat{g}_{\text{NN}}(s) = y^*.
    \end{equation}
    For any $(s^*,y^*)$ selected by $\hat{g}_{\text{NN}}(s)$, we have $d_{\phi,y^*}(s)\le d_{\phi,i}(s)$, $\forall (s_i,y_i)\in\mathcal{D}$. Then, $\hat{\eta}_{y^*}(s)\ge\hat{\eta}_k(s)$, $\forall k\in \{0,1,2\}$.
    Therefore, $\hat{g}(s)$ always has the same estimation as the nearest neighbor classifier $\hat{g}_{\text{NN}}(s)$.
    The classifier $\hat{g}(s)$ is equivalent to a nearest neighbor classifier $\hat{g}_{\text{NN}}(s)$. 

    Next, we study the asymptotic expected error rate of the nearest neighbor classifier $\hat{g}_{\text{NN}}(s)$ for this 3-class classification problem. The analysis is inspired by \citet{cover1967nearest}.

    In this 3-class classification problem, the classification error occurs when $y^* \neq g(s)$. Given $s$ and its nearest neighbor $s^*$, the error probability is:
    \begin{equation}
        P(y^* \neq g(s) \mid s, s^*) = 1 - P(y^* = g(s) \mid s, s^*).
    \end{equation}
    Note that conditional on $s$, the distribution of $g(s)$ is determined by $\eta_k(s)$. Since $y^*$ is a realization of $g(s^*)$, conditional on $s^*$, the distribution of $y^*$ is determined by $\eta_k(s^*)$. 
    As $y^*$ and $g(s)$ are conditionally independent given $s$ and $s^*$, we have
    \begin{equation}
        P(y^* = g(s) \mid s, s^*) = \sum_{k=0}^2 P(g(s) = k \mid s) P(y^* = k \mid s^*) = \sum_{k=0}^2 \eta_k(s) \eta_k(s^*),
    \end{equation}
    Taking expectation over $s$ and $s^*$, we have
    \begin{equation}
        P(\hat{g}_{\text{NN}}(s) \neq g(s)) = \mathbb{E} \left[ 1 - \sum_{k=0}^2 \eta_k(s) \eta_k(s^*) \right].
    \end{equation}
    As $n \to \infty$, $s^*$ converges to $s$ (due to denseness of the point set), and if $\eta_k$ is continuous, then $\eta_k(s^*) \to \eta_k(s)$. Therefore, the asymptotic error rate becomes
    \begin{equation} %
        P(\hat{g}_{\text{NN}}(s) \neq g(s)) =\mathbb{E} \left[ 1 - \sum_{k=0}^2 \eta_k(s)^2 \right]. \label{eq:lem-nearest-1}
    \end{equation}
    To bound this expression, let $\eta_{(0)} \geq \eta_{(1)} \geq \eta_{(2)}$ be the ordered values of $\eta_k(s)$, so $\max_k \eta_k(s) = \eta_{(0)}$ and $P^*(s) = 1 - \eta_{(0)}$. The sum of squares $\sum \eta_k(s)^2$ is minimized when the remaining probability $1 - \eta_{(0)}$ is distributed uniformly among the other $2$ classes.
    Therefore, we have
    \begin{equation}
        \sum_{k=0}^2 \eta_k(s)^2 \geq \eta_{(0)}^2 + \frac{(1 - \eta_{(0)})^2}{2}.
    \end{equation}
    Using $P^*(s) = 1 - \eta_{(0)}$, we have
    \begin{equation}
        1 - \sum_{k=0}^2 \eta_k(s)^2 \leq 2P^*(s) - \frac{3}{2} [P^*(s)]^2.
    \end{equation}

    Substituting it into \eqref{eq:lem-nearest-1}, we derive that for this $3$-class classification problem, as $n \to \infty$, the error rate of the nearest neighbor classifier satisfies:
    \begin{equation}
        P(\hat{g}_{\text{NN}}(s) \neq g(s)) \leq 2 P^*(s) - \frac{3}{2} [P^*(s)]^2.
    \end{equation}

    Therefore, $\hat{g}(s)$ satisfies 
    \begin{equation}
        P(\hat{g}(s) \neq g(s)) \leq 2 P^*(s) - \frac{3}{2} [P^*(s)]^2,
    \end{equation}
    which concludes the proof.
\end{proof}

\section{Derivation of the Objective Function}
\label{app:obj}

In this section, we derive the objective function of \method, \eqref{eq:obj-stair-final}, from the DSD-form objective function, \eqref{eq:obj-stair}, in a similar manner as \citet{kim2024dodont}.
We assume the guidance signal $w(s)$ is continuous.

We start by restating \eqref{eq:obj-stair}:
\begin{equation}
    \sup_{\pi,\phi} \mathbb{E}_{\tau,z}\!\left[\sum_{t=0}^{T-1}  \left(\phi(s_{t+1}) - \phi(s_t)\right)^\top z\right] \; \text{s.t.} \; \|\phi(s') - \phi(s)\|_2 \leq w(s)d(s, s'), \; \forall (s,s') \in S_\text{adj}.
\end{equation}

Define a scaled latent function $\phi'(s)\triangleq \frac{\phi(s)}{w(s)}$. 
As $w(s)> 0$ in \method, we derive the following formula approximately.
\begin{equation}
    \sup_{\pi,\phi} \mathbb{E}_{\tau,z}\!\left[\sum_{t=0}^{T-1}  \left(\phi(s_{t+1}) - \phi(s_t)\right)^\top z\right] \; \text{s.t.} \; \|\phi'(s') - \phi'(s)\|_2 \leq d(s, s'), \; \forall (s,s') \in S_\text{adj}.
\end{equation}
The approximation $\frac{\phi(s')}{w(s)}\approx \frac{\phi(s')}{w(s')}$ leverages the continuity of the guidance signal $w(s)$, as $s$ and $s'$ are adjacent states in this context.

Then, we replace $\phi(s)$ with $w(s)\phi'(s)$, deriving
\begin{equation}
    \sup_{\pi,\phi} \mathbb{E}_{\tau,z}\!\left[\sum_{t=0}^{T-1} w(s_t) \left(\phi'(s_{t+1}) - \phi'(s_t)\right)^\top z\right] \; \text{s.t.} \; \|\phi'(s') - \phi'(s)\|_2 \leq d(s, s'), \; \forall (s,s') \in S_\text{adj}.
\end{equation}
which is exactly \eqref{eq:obj-stair-final}.

\newpage

\section{More Experimental Results}
\label{app:exp-more}

\paragraph{Safe state ratio results for Section \ref{exp:main-results}.}

We report the safe state ratio results in Table \ref{tab:main_safe_ratio_results}.
For tasks with additional good labels (Ant Range-North and Ant Hole-North), the safe ratio is calculated as the proportion of good and neutral state bins relative to all visited state bins.
As shown in the table, \method outperforms other baselines in 7 out of 10 tasks. 
Note that DIAYN achieves 100\% safe state ratio in Ant Hole and Ant Hole-North tasks. This is primarily because the range of states covered by DIAYN is highly restricted, causing it to visit only good or neutral states.
In contrast, \method achieves a superior safe coverage (in Table \ref{tab:main_coverage_results}) and safe state ratio simultaneously, demonstrating its effectiveness.

\begin{table*}[ht]
\centering
\caption{
Safe state ratio results (\%) of \method and baselines. 
The orange and gray shading represent the best and oracle performances, respectively. 
\method achieves superior performance across tasks. 
}
\label{tab:main_safe_ratio_results}
\begin{adjustbox}{max width=\textwidth}
\begin{tabular}{l|rrrrrr}
\toprule
\makecell[c]{\textbf{Method}} & 
\makecell[c]{\textbf{Ant North}} & %
\makecell[c]{\textbf{Ant Range}} & 
\makecell[c]{\textbf{Ant Hole}} & %
\makecell[c]{\textbf{HalfCheetah} \\ \textbf{Right}} & 
\makecell[c]{\textbf{Quadruped} \\ \textbf{North}} & 
\makecell[c]{\textbf{Humanoid} \\ \textbf{Hole}}  \\
\midrule
\second{Oracle} &
\second{92.60} {\tiny $\pm$ ~ 1.60} & \second{94.30} {\tiny $\pm$ ~ 2.00} & 
\second{98.90} {\tiny $\pm$ ~ 0.70} & \second{99.00} {\tiny $\pm$ ~ 0.00} & 
\second{79.80} {\tiny $\pm$ ~ 2.90} & \second{91.20} {\tiny $\pm$ ~ 5.30} \\

DIAYN   &
0.00 {\tiny $\pm$ ~ 0.00} & 0.00 {\tiny $\pm$ ~ 0.00} &
\best{100.00} {\tiny $\pm$ ~ 0.00} & 50.00 {\tiny $\pm$ ~ 0.00} &
6.20 {\tiny $\pm$ ~ 8.50} & \best{100.00} {\tiny $\pm$ ~ 0.00} \\

LSD &
18.30 {\tiny $\pm$ 10.90} & 36.40 {\tiny $\pm$ 26.10} &
69.00 {\tiny $\pm$ 13.50} & 28.50  {\tiny $\pm$ ~ 4.20} &
20.60 {\tiny $\pm$ 24.20} & \best{100.00} {\tiny $\pm$ ~ 0.00} \\

METRA &
20.40 {\tiny $\pm$ 14.70} & 23.90 {\tiny $\pm$ ~ 2.10} &
74.70 {\tiny $\pm$ ~ 2.20} & 48.00 {\tiny $\pm$ ~ 1.00} &
21.10 {\tiny $\pm$ 10.60} & 82.30 {\tiny $\pm$ 15.40} \\

DDG$^*$ & 
20.10 {\tiny $\pm$ 16.80} & 28.10 {\tiny $\pm$ ~ 8.80} & 
73.90 {\tiny $\pm$ ~ 9.40} & 52.70 {\tiny $\pm$ ~ 3.30} & 
11.30 {\tiny $\pm$ 14.30} & \best{100.00} {\tiny $\pm$ ~ 0.00} \\

DoDont$^*$ &
93.40 {\tiny $\pm$ ~ 4.70} & 36.70 {\tiny $\pm$ ~ 5.70} &
83.20 {\tiny $\pm$ ~ 4.50} & 86.70 {\tiny $\pm$ ~ 5.30} &
76.00 {\tiny $\pm$ ~ 8.00} & 84.20 {\tiny $\pm$ ~ 4.30} \\

\method &
\best{96.90} {\tiny $\pm$ ~ 2.50} & \best{80.90} {\tiny $\pm$ ~ 7.80} &
90.60 {\tiny $\pm$ ~ 2.40} & \best{98.70} {\tiny $\pm$ ~ 0.50} &
\best{87.60} {\tiny $\pm$ ~ 7.50} & 90.50 {\tiny $\pm$ ~ 9.80} \\

\midrule

\textbf{Method} & 
\makecell[c]{\textbf{Ant Range-} \\ \textbf{North}} & 
\makecell[c]{\textbf{Ant Hole-} \\ \textbf{North}} &
\makecell[c]{\textbf{HalfCheetah} \\ \textbf{Not-Flip}} & 
\makecell[c]{\textbf{Safety-Gym} \\ \textbf{Hazard}} 
& & \\
\midrule
\second{Oracle} & 
\second{95.80} {\tiny $\pm$ ~ 1.50} & 
\second{98.60} {\tiny $\pm$ ~ 0.30} & 
\second{100.00} {\tiny $\pm$ ~ 0.00} & \second{37.00} {\tiny $\pm$ 4.70}
& & \\
DIAYN & 
0.00 {\tiny $\pm$ ~ 0.00} & 
\best{100.00} {\tiny $\pm$ ~ 0.00} & 
75.20 {\tiny $\pm$ 14.00} & 28.30 {\tiny $\pm$ 7.20}
& & \\
LSD & 
36.40 {\tiny $\pm$ 26.10} & 
74.50 {\tiny $\pm$ ~ 1.50} & 
76.10 {\tiny $\pm$ ~ 5.80} & 23.20 {\tiny $\pm$ 7.80}
& & \\
METRA & 
23.90 {\tiny $\pm$ ~ 2.10} & 
75.60 {\tiny $\pm$ ~ 2.50} & 
89.20 {\tiny $\pm$ ~ 3.20} & 28.30 {\tiny $\pm$ 9.30}
& & \\

DDG$^*$ & 
40.20 {\tiny $\pm$ 33.50} & 76.90 {\tiny $\pm$ 16.40} & 
\best{100.00} {\tiny $\pm$ ~ 0.00} & 33.50 {\tiny $\pm$ 8.20} & 
& \\

DoDont$^*$ & 
41.30 {\tiny $\pm$ ~ 6.20} & 
83.90 {\tiny $\pm$ ~ 5.00} & 
91.70 {\tiny $\pm$ ~ 2.80} & 26.50 {\tiny $\pm$ 8.30} 
& & \\

\method & 
\best{81.40} {\tiny $\pm$ ~ 8.50} & 
93.10 {\tiny $\pm$ ~ 4.80} & 
\best{100.00} {\tiny $\pm$ ~ 0.00} & \best{40.00} {\tiny $\pm$ 6.70}
& & \\
\bottomrule

\end{tabular}
\end{adjustbox}
\end{table*}

\paragraph{Evaluation under noisy feedback. }
Practical human feedback often contains noise. To demonstrate the robustness of \method to noisy feedback, we evaluate \method under noisy labeling conditions. To simulate human labeling errors, we randomly assign labels (neutral or bad) to states within a band of width $R_\text{error}$ around the safety boundaries, reflecting potential human uncertainty in these regions. $R_\text{error}=0$ means that there is no labeling noise.
As shown in the Table \ref{tab:re:noise} and Fig. \ref{fig:re:noise}, \method remains robust under noisy labels, indicating the reliability of our training-free guidance mechanism.

\begin{figure}[h]
    \centering
    \vspace{-5pt}
    \begin{minipage}{0.25\textwidth}
        \centering
        \includegraphics[height=95pt]{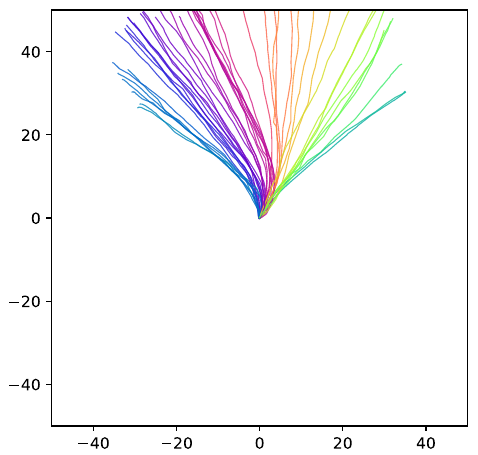}
        \subcaption{North, $R_\text{error} = 0.5$.}
    \end{minipage} \hspace{-20pt}
    \begin{minipage}{0.25\textwidth}
        \centering
        \includegraphics[height=95pt]{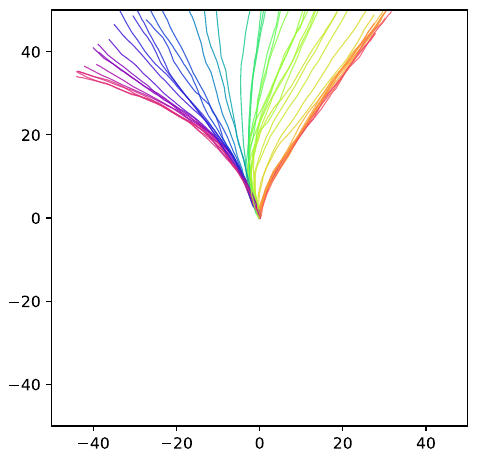}
        \subcaption{North, $R_\text{error} = 1$.}
    \end{minipage} \hspace{-20pt}
    \begin{minipage}{0.25\textwidth}
        \centering
        \includegraphics[height=95pt]{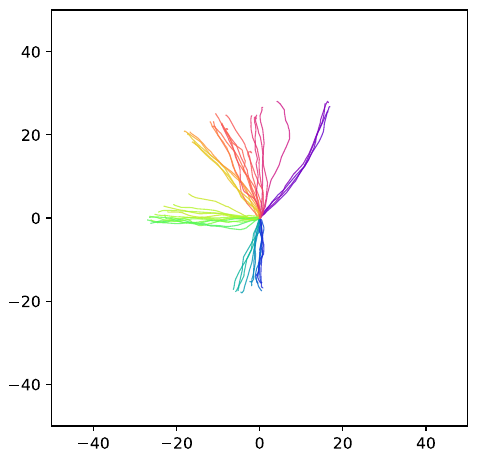}
        \subcaption{Range, $R_\text{error} = 0.5$.}
    \end{minipage} \hspace{-20pt}
    \begin{minipage}{0.25\textwidth}
        \centering
        \includegraphics[height=95pt]{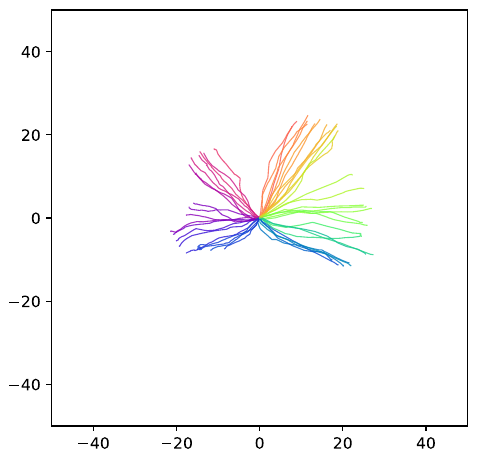}
        \subcaption{Range, $R_\text{error} = 1$.}
    \end{minipage}
    \vspace{-5pt}
    \caption{Visualizations of skills learned by \method under noisy feedback with varying $R_\text{error}$ on Ant North and Range tasks.}
    \label{fig:re:noise}
    \vspace{-10pt}
\end{figure}

\paragraph{Ablation of feedback number $N_\text{total}$. }
To evaluate the impact of sample sparsity on the performance of \method, we evaluate \method with different numbers of samples. As shown in the Table \ref{tab:re:feedback_num} and Fig. \ref{fig:re:feedback_num}, \method effectively aligns with human intent even with only 10 or 20 labels, while its performance improves as the number of labels increases.

\begin{figure}[H]
    \centering
    \begin{minipage}{0.25\textwidth}
        \centering
        \includegraphics[height=95pt]{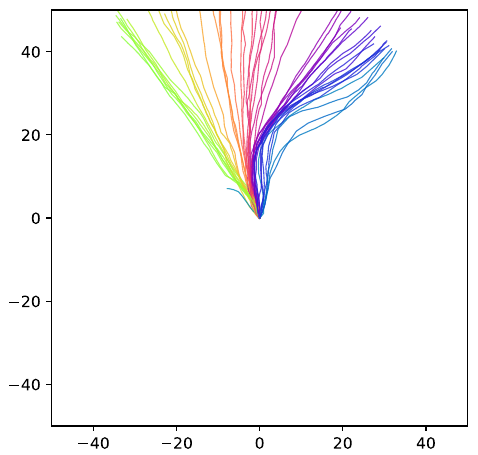}
        \subcaption{Ant North, $N_\text{total} = 20$.}
    \end{minipage} \hspace{-20pt}
    \begin{minipage}{0.25\textwidth}
        \centering
        \includegraphics[height=95pt]{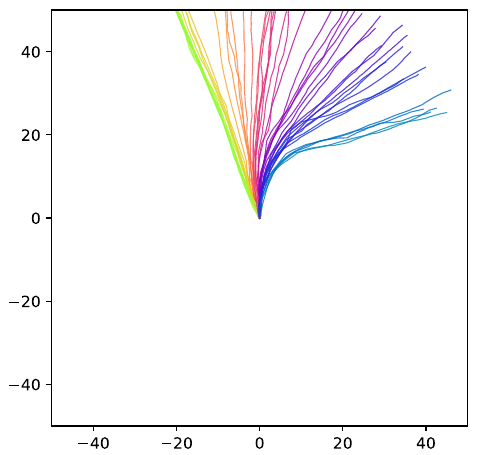}
        \subcaption{Ant North, $N_\text{total} = 10$.}
    \end{minipage} \hspace{-20pt}
    \begin{minipage}{0.25\textwidth}
        \centering
        \includegraphics[height=95pt]{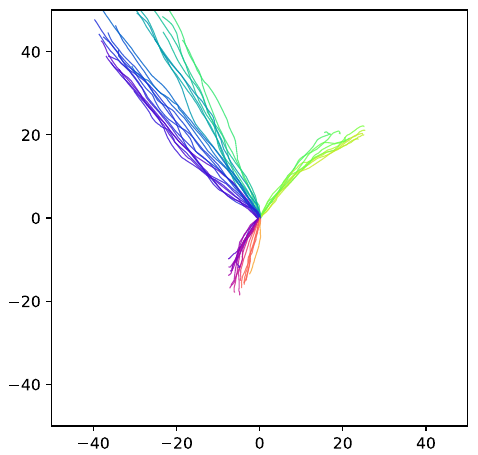}
        \subcaption{Ant Hole, $N_\text{total} = 20$.}
    \end{minipage} \hspace{-20pt}
    \begin{minipage}{0.25\textwidth}
        \centering
        \includegraphics[height=95pt]{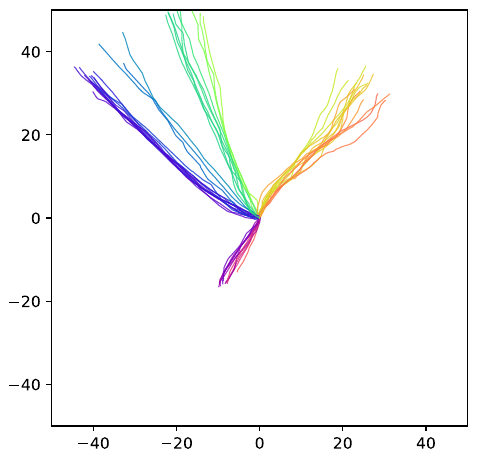}
        \subcaption{Ant Hole, $N_\text{total} = 10$.}
    \end{minipage}
    \caption{Visualizations of skills learned by \method with varying feedback number $N_\text{total}$.}
    \label{fig:re:feedback_num}
    \vspace{-10pt}
\end{figure}

\paragraph{Ablation on query selection methods. }
To evaluate the effectiveness of \method's query selection method, we compare it with three baselines: a uniform sampling method (Uniform), an uncertainty-focused method (Uncertainty) that prioritizes states with large guidance signal entropy: 
\begin{equation}
    H_\text{w}(s)=\mathbb{H} \big[\text{softmax}\left( [-d_\phi(s, \mathcal{D}_i)]_{i=0}^2 \right)\big],
\end{equation}
and a variant of \method (denoted as \method-$\phi$) that maximizes the entropy of labeled states in the latent space:
\begin{equation}
H_\text{latent}(s) = -\log {\Pr}_{s\sim D}(\phi(s)). 
\end{equation}
As shown in Table \ref{tab:query_selection-full}, \method consistently outperforms all three variants, achieving higher safe state coverage across all tasks. 
While the Uncertainty approach refines preference boundaries, it often neglects exploration and fails to cover diverse regions of the state space. 
In contrast, \method prioritizes under-explored regions, ensuring effective guidance across a wider state space under sparse feedback. %
These results underscore the critical role of exploration in query selection of GSD. 
For the \method-$\phi$ method, it aims to leverage the well-structured latent space to enhance the query selection method, which improves the accuracy of $w(s)$ by selecting states that can best cover the latent space. 
However, its effectiveness is limited in some complex tasks, such as Ant Hole, perhaps because the latent space is not yet fully developed in such complex state spaces during early training stages. 
Therefore, for the robustness of the proposed method, we use the state space entropy instead of skill latent space entropy in \method.

\begin{table}[H]
\centering
\caption{Comparison of safe state coverage of \method and various query selection methods. 
}
\label{tab:query_selection-full}
\begin{adjustbox}{max width=0.6\textwidth}
\begin{tabular}{lrrrr}
\toprule
Method & Ant Hole & Ant North & Ant Range \\
\midrule
\method   & 1149.20 {\tiny $\pm$ 127.05} & 1333.20 {\tiny $\pm$ 129.10} & 329.80 {\tiny $\pm$ 110.41} \\
Uniform    & 633.80  {\tiny $\pm$ 279.41} & 1257.80 {\tiny $\pm$ 189.01} & 40.80 {\tiny $\pm$ 130.20} \\
Uncertainty    & 1034.40 {\tiny $\pm$ 468.49} & 1059.20 {\tiny $\pm$  ~ 79.16}  & -79.60 {\tiny $\pm$ 302.86} \\
\method-$\phi$   & 951.40  {\tiny $\pm$ 237.30} & 1285.40 {\tiny $\pm$ 256.10} & 344.40 {\tiny $\pm$ ~ 53.39} \\
\bottomrule
\end{tabular}
\end{adjustbox}
\end{table}

\paragraph{Ablation of the smooth parameter.}
To evaluate the impact of the smoothing parameter $k_\beta$, which controls the transition speed from USD to GSD, we conduct an ablation study. 
Table \ref{tab:smooth-full} shows that incorporating $k_\beta$ improves performance, but overly slow transitions weaken the guidance signals, impairing skill learning. 
Based on these results, we set $k_\beta = 5$ in the main experiments, as it consistently outperforms the configuration without smoothing. 

\begin{table}[ht]
\centering
\caption{Safe state coverage results of \method using different smoothing parameter $k_\beta$. 
}
\label{tab:smooth-full}
\begin{adjustbox}{max width=0.5\textwidth}
\begin{tabular}{rrrr}
\toprule
$k_\beta$ & Ant Hole & Ant North & Ant Range \\
\midrule
3 & 1041.60 {\tiny $\pm$ 226.01} & 1349.00 {\tiny $\pm$ 162.01} & 312.40 {\tiny $\pm$ 59.77} \\
5 & 1149.20 {\tiny $\pm$ 127.05} & 1333.20 {\tiny $\pm$ 129.10} & 362.20 {\tiny $\pm$ 94.55} \\
10 & 1068.40 {\tiny $\pm$ 321.47} & 1251.80 {\tiny $\pm$ ~ 99.38}  & 399.40 {\tiny $\pm$ 65.12} \\
$\infty$ &  991.60 {\tiny $\pm$ 188.15} & 1120.80 {\tiny $\pm$ 420.01} & 320.00 {\tiny $\pm$ 91.07} \\
\bottomrule
\end{tabular}
\end{adjustbox}
\end{table}
\paragraph{Ablation of segment length $H$. }
We evaluate \method with varying segment lengths $H$. As shown in the Table \ref{tab:re:segment_len} and Fig. \ref{fig:re:segment_len}, \method consistently achieves superior performance across different segment lengths ($H=20, 40, 60$), demonstrating its robustness to this parameter.

\begin{table}[H] 
\centering
\caption{Safe state coverage results of \method with different segment length $H$.}
\label{tab:re:segment_len}
\begin{tabular}{crr}
\toprule
$H$ & Ant North & Ant Range \\
\midrule
20 & 1333.20 {\tiny $\pm$ 129.10} & 362.20 {\tiny $\pm$ 94.55} \\
40 & 1325.20 {\tiny $\pm$ ~ 75.70} & 309.25 {\tiny $\pm$ 77.00} \\
60 & 1327.60 {\tiny $\pm$ 132.60} & 347.00 {\tiny $\pm$ 35.24} \\
\bottomrule
\end{tabular}
\end{table}

\begin{figure}[h]
    \centering
    \begin{minipage}{0.25\textwidth}
        \centering
        \includegraphics[height=95pt]{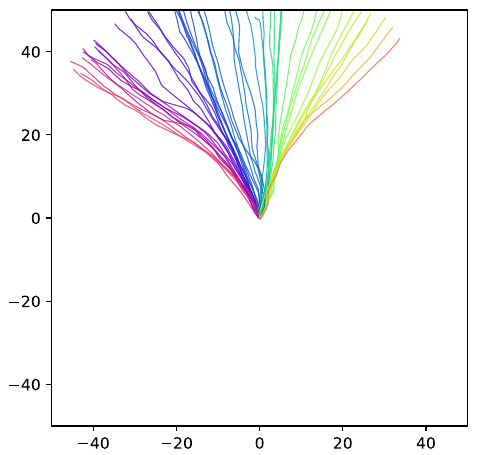}
        \subcaption{Ant North, $H = 40$.}
    \end{minipage} \hspace{-20pt}
    \begin{minipage}{0.25\textwidth}
        \centering
        \includegraphics[height=95pt]{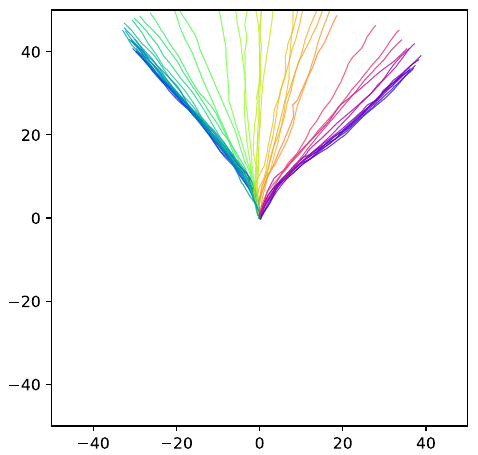}
        \subcaption{Ant North, $H = 60$.}
    \end{minipage} \hspace{-20pt}
    \begin{minipage}{0.25\textwidth}
        \centering
        \includegraphics[height=95pt]{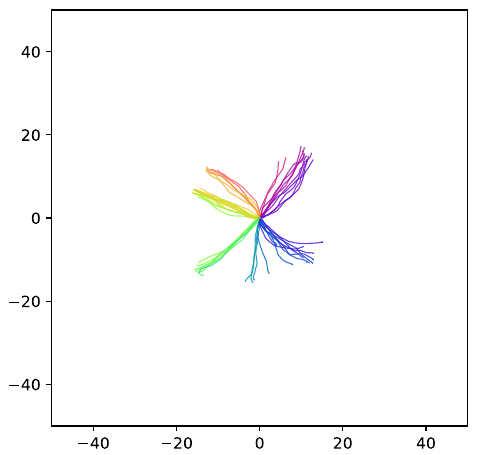}
        \subcaption{Ant Range, $H = 40$.}
    \end{minipage} \hspace{-20pt}
    \begin{minipage}{0.25\textwidth}
        \centering
        \includegraphics[height=95pt]{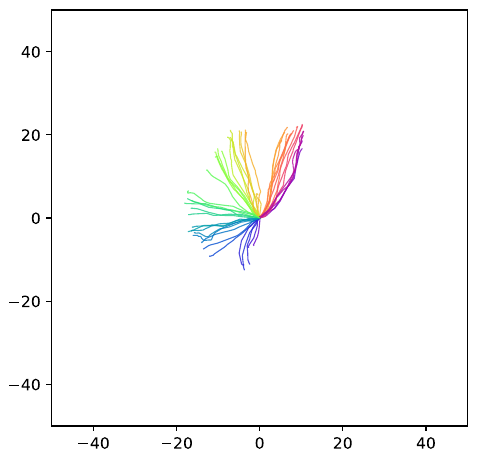}
        \subcaption{Ant Range, $H = 60$.}
    \end{minipage}
    \caption{Visualizations of skills learned by \method with varying segment length $H$.}
    \label{fig:re:segment_len}
    \vspace{-10pt}
\end{figure}

\paragraph{Evaluation under noisy observation. }

To evaluate \method's robustness to state observation noise, we add Gaussian noise to the observed state: for an original state $s$, the observed input is $\hat{s}=(1+x)\cdot s$, where $x\sim \mathcal{N}(0,0.01I)$, and $\mathcal{N}$ denotes the Gaussian distribution. 
As shown in the table below, the performance degrades only slightly, demonstrating that COLLIE remains effective under moderate observation noise.

\begin{table}[H] 
\centering
\caption{Safe state coverage results of \method under noisy observation.}
\label{tab:re:noisy_obs}
\begin{tabular}{lr}
\toprule
Method & Ant Range \\
\midrule
\method & 362.20 {\tiny $\pm$ 94.55} \\
\method (w/ noise) & 320.80 {\tiny $\pm$ 83.15} \\
\bottomrule
\end{tabular}
\end{table}

\paragraph{Ablation of additive reward function.}
\method employs a multiplicative intrinsic reward function, which is derived by integrating the guidance signal as a distance constraint within the DSD framework.
To assess the efficacy of this reward form, we conducted an ablation study using the additive variant: $r=(\phi(s')-\phi(s))^Tz+w(s)$. 
As shown in the table below, the additive reward degrades performance, confirming that our multiplicative formulation better preserves the theoretical grounding established in Sections \ref{sec:method:latent}-\ref{sec:method:weight}.

\begin{table}[H] 
\centering
\caption{Safe state coverage results of \method with different reward form.}
\label{tab:re:additive_reward}
\begin{tabular}{lrr}
\toprule
Method & Ant North & Ant Hole \\
\midrule
\method & 1333.20 {\tiny $\pm$ 129.10} 
& 1149.20 {\tiny $\pm$ 127.05} \\
\method (additive reward) & 1160.80 {\tiny $\pm$ 157.70} 
& 354.40 {\tiny $\pm$ 177.82} \\
\bottomrule
\end{tabular}
\end{table}

\paragraph{Human experiments. }
We engage human labelers to provide feedback on visualized 2D trajectories in the Ant-Range task, who are instructed by task descriptions in Appendix \ref{app:exp:setup}.
We conduct five runs with different seeds, collecting 40 human labels per run.
As shown in the Table \ref{tab:re:human} and Fig. \ref{fig:re:skill_human}, \method consistently achieved high performance, confirming its practical effectiveness with real human input.

\begin{table}[H]
  \centering
  \caption{Safe state coverage results of \method with the oracle teacher in Section \ref{sec:exp:setup} and with real human labelers. }
  \label{tab:re:human}
  \begin{tabular}{lcc}
    \toprule
    Method & Ant Range \\
    \midrule
    \method (oracle teacher) & 362.20 {\tiny $\pm$ 94.55} \\
    \method (human labelers) & 361.40 {\tiny $\pm$ 49.95} \\
    \bottomrule
  \end{tabular}
\end{table}

\begin{figure}[h]
    \centering
    \includegraphics[height=95pt]{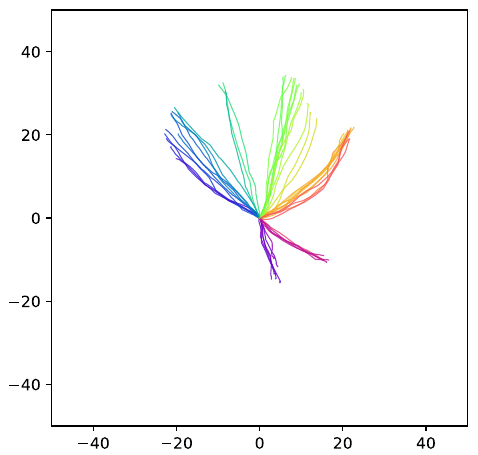}
    \caption{Visualizations of skills learned by \method with real human labelers on the Ant Range task.
    }
    \label{fig:re:skill_human}
\end{figure}
\clearpage
\newpage

\section{Discussion and Limitation} 
\label{app:discussion}

\paragraph{Dead zone phenomenon in scenarios with obstacles.}
Despite \method's strong performance, we identified under-explored regions in complex scenarios like ``Hole''.
As shown in Fig. \ref{fig:main_visualization}, both \method and DoDont$^*$ fail to explore areas behind the holes in the Ant Hole task. 
Even with accurate guidance, as demonstrated by the ``Oracle'' results, the base skills fail to bypass obstacles and reach areas hidden behind them.

We believe this is primarily due to the inherent exploration mechanisms of the underlying METRA framework. 
In scenarios with obstacles, METRA's optimal behaviors do not encourage skills to explore regions behind obstacles, resulting in ``dead zones''.

\begin{figure}[H]
    \centering
    \includegraphics[width=0.7\linewidth]{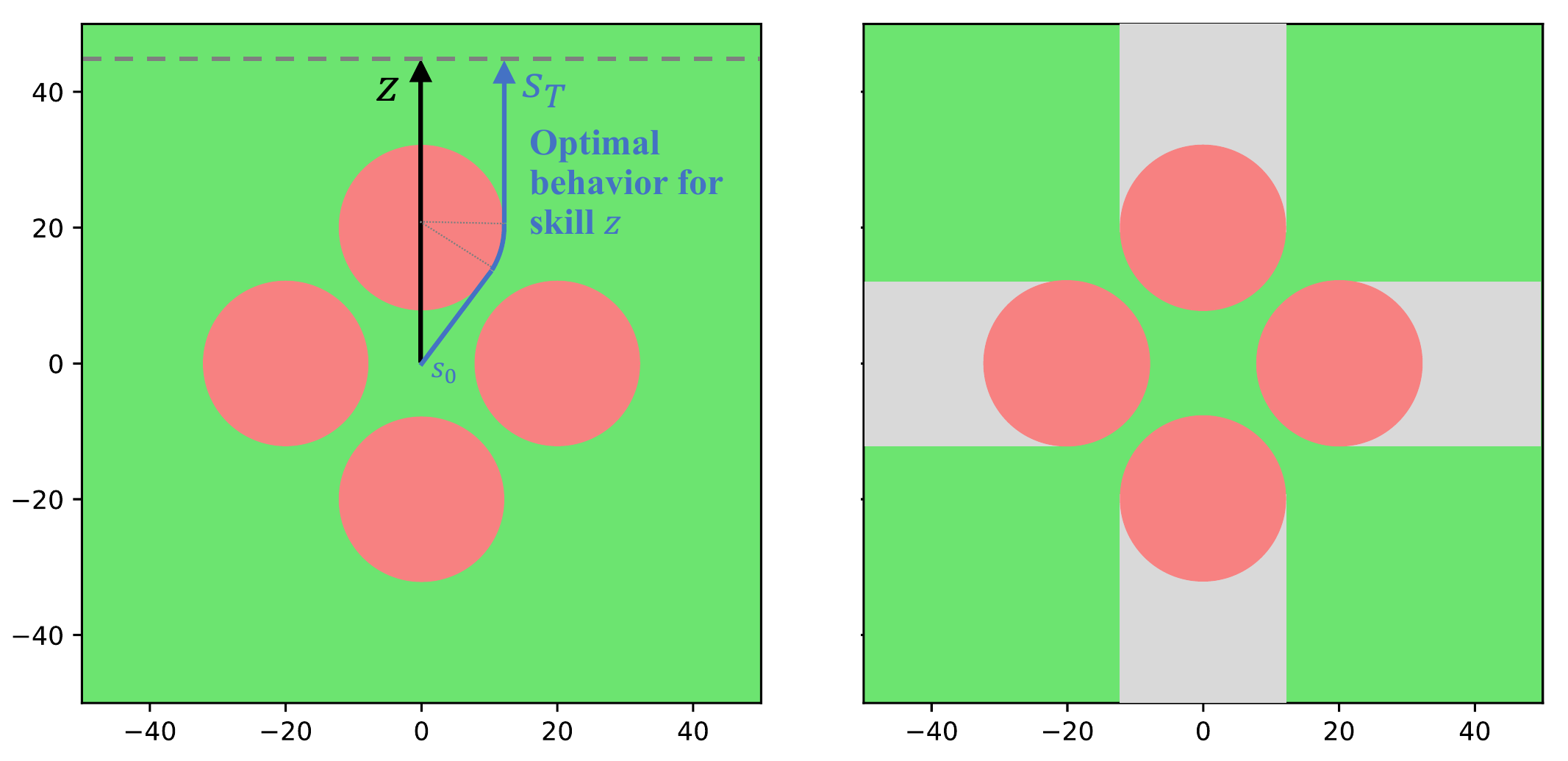}
    \caption{
    Visualization of the reason for the dead zone phenomenon in scenarios with obstacles (Hole as an example). 
    Human-undesirable regions are highlighted in red, and other regions are highlighted in green. 
    In the right subfigure, gray regions represent ``dead zones''. 
    }
    \label{fig:under_explored}
\end{figure}

We use Fig. \ref{fig:under_explored} to illustrate this phenomenon. 
Consider a mass point in a ``Hole'' scenario, with a 2-dimensional skill space, skills are expected to move in diverse directions. 
If we ignore the human-undesirable regions, the METRA's latent space $\phi(s)$ aligns with the 2D state space, i.e., $\phi(s)=s$. 
Since \method's objective (\eqref{eq:obj-stair-final}) differs from METRA only by reweighting, if the samples to train the latent space are sufficient and the non-human-undesirable regions are well explored, the latent space \method learned will be equivalent to METRA learned in the non-human-undesirable regions.

Consequently, for the specific skill latent $z$ shown in Fig. \ref{fig:under_explored}, the optimal behavior follows the blue trajectory, which achieves the largest reward $(\phi(s')-\phi(s))^T z$ within the fixed timesteps. 
The trajectory of such optimal behavior will go parallel with the skill latent $z$ after passing the human-undesirable regions, which makes the region just behind the hole a dead zone.

The main reason for the dead zone is that METRA uses the DSD framework only to encourage the coverage of skills in the state space by aligning the trajectory with the uniformly distributed skill latent $z$, without using any pure exploration schemes such as prediction errors \cite{pathak2017curiosity, burda2019exploration}, state entropy \cite{hazan2019provably, liu2021behavior}, or pseudo-counts \cite{bellemare2016unifying, ostrovski2017count}. 
This makes METRA easily ignore the under-explored areas, even if they are close to the learned skills and are not hard to reach. 
Combining the DSD framework with exploration schemes is a promising aspect for addressing the dead zone phenomenon.

\paragraph{Evaluation on increasing skill latent dimension. }
To investigate whether under-explored regions result from the limited representational capacity of the skill space, we increased the skill latent dimension from 2 to 4 to enhance its ability to encode diverse behaviors. 
However, as visualized in Fig. \ref{fig:re:skill_d4}, the expanded skill space did not lead to exploration of the occluded regions. 
This result is consistent with the previous analysis, confirming that the occurrence of under-explored regions is not due to insufficient expressive capacity of the skill space, but stems from inherent limitations of the METRA algorithm.

\begin{figure}[ht]
    \centering
    \includegraphics[height=95pt]{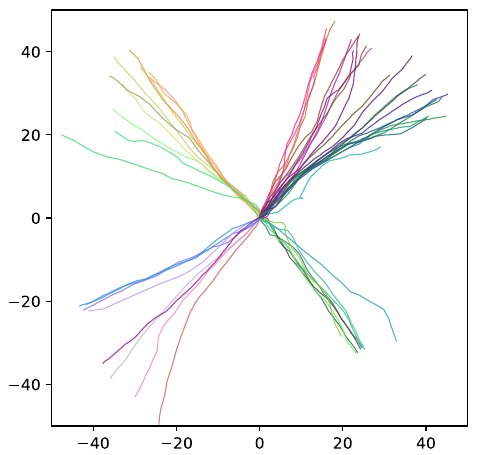}
    \caption{Visualizations of skills learned by \method using 4-dim skills.
    }
    \label{fig:re:skill_d4}
\end{figure}

\newpage
\section{Extended Related Work}
\label{app:ext-related-work}

\paragraph{Unsupervised reinforcement learning.}
Unsupervised reinforcement learning \cite{xie2022pretraining} learns a policy or set of policies with unlabeled data (transitions without task-specific rewards) to explore the state space. 
The target is to acquire knowledge of the environment, thereby facilitating downstream tasks.
To encourage exploration, various intrinsic rewards are proposed, such as prediction errors \cite{pathak2017curiosity, burda2019exploration}, state entropy \cite{hazan2019provably, liu2021behavior}, pseudo-counts \cite{bellemare2016unifying, ostrovski2017count}, and empowerment measures \cite{eysenbach2019diversity, sharma2020dynamics}.

\paragraph{Unsupervised skill discovery.}
Skill discovery methods construct intrinsic reward with empowerment measures and learn a set of distinguishable policies to cover the state space jointly.
A typical choice for the empowerment measure is the mutual information $I(s,z)$ between the state $s$ and the skill latent $z$.
Recent studies explore different mutual information formulations.
The reverse form $I(s,z)=H(z)-H(z|s)$~\cite{eysenbach2019diversity,park2022lipschitz} trains an additional skill discriminator $q(z|s)$ to encourage skills to visit different states.
The forward form $I(s,z)=H(s)-H(s|z)$ \cite{sharma2020dynamics,liu2021aps,laskin2022cic} trains an additional state density model $q(s|z)$ for each skill, enabling integration with model-based RL algorithms.
Though maximizing the mutual information could induce diverse behaviors, this does not encourage exploration, which may lead to static behaviors \cite{park2022lipschitz, park2023metra}.

To address this issue, recent studies have explored various strategies, such as employing exploration methods to collect diverse trajectories \cite{campos2020explore}, incorporating an entropy maximization term into the intrinsic reward \cite{liu2021aps}, and eliminating the anti-exploration term from mutual information in skill learning \cite{zheng2025can}.
An outstanding category is the distance-maximizing skill discovery approach (DSD) \cite{park2023controllability}, which links the distance in latent space with that in space to encourage coverage in state space.
The objective function of DSD is formally derived in METRA \cite{park2023metra} by replacing the traditional mutual information objective in SD with the Wasserstein dependency measure (WDM).
The distance could be any arbitrary function $d(\cdot,\cdot):\mathcal{S}\times\mathcal{S}\rightarrow \mathbb{R}_0^+$ to encourage exploring state sub-space with different properties.
For example, Euclidean distance \cite{park2022lipschitz} encourages geometrically longer travel, the negative log-likelihood of an estimated transition probability \cite{park2023controllability} encourages visiting rarely visited states, and temporal distance \cite{park2023metra} encourages temporally distant exploration.
However, these DSD methods do not consider human desirability when exploring, which makes the exploration inefficient when the state space is vast and complex.

\paragraph{Guided skill discovery.}
Recent studies mitigate unnecessary exploration in unsupervised skill discovery by incorporating prior knowledge into skill learning.
The prior knowledge could come from expert trajectories \cite{klemsdal2021learning, kim2024dodont} and analytical formulas of constraints \cite{kim2023safety}.
Specifically, \citet{klemsdal2021learning} and DoDont \cite{kim2024dodont} train a classifier to distinguish expert trajectories from other trajectories.
\citet{klemsdal2021learning} further uses the encoder of the classifier as a state projection to encourage exploring the expert-concerned state subspace.
While DoDont use the probability output by the classifier as the distance function in DSD.
\citet{kim2023safety} considers Lagrangian Q learning in skill learning to ensure the safety of learned skills.
Recent studies explore utilizing pairwise human preferences \cite{hussonnois2023controlled, hussonnois2025human} to learn a human-aligned reward model.
These models then guide the skill discovery process by identifying preferred regions \cite{hussonnois2023controlled} or encouraging alignment between skills and human values \cite{hussonnois2025human}.
Despite these advancements, deriving expert trajectories or constraint formulas is often challenging and impractical in complex tasks, and classifiers or reward models trained on limited data can be unstable. 
This paper aims to address these limitations.

\paragraph{Human feedback in policy learning.}
Prior works have demonstrated the effectiveness of integrating human feedback into policy learning to overcome the challenges of manual reward design.
Early works such as \citet{daniel2014active} used active queries with numerical ratings, while \citet{akrour2014programming} and \citet{sugiyama2012preference} leveraged pairwise preferences to iteratively refine policies or reward functions.
\citet{wang2016learning} explored interactive learning mechanisms in language games through implicit selection feedback.
Building on these foundations, preference-based reinforcement learning (PbRL) \cite{christiano2017deep, PbMORL} has emerged as a key framework for aligning agents with human intent through structured comparisons.
Inspired by these efforts, \method incorporates human guidance to direct exploration toward desirable behaviors. 

However, a core limitation of PbRL is the high cost of human supervision. To address this issue, recent PbRL studies have focused on improving feedback efficiency via enhanced query selection \cite{lee2021pebble, shin2023benchmarks, luan2025stair}, unsupervised pretraining \cite{lee2021pebble, cheng2024rime}, and data augmentation \cite{park2022surf, choi2024listwise}. 
Despite these efforts, recent studies show that pairwise comparisons, a common approach in PbRL, suffer from segment indistinguishability \cite{mu2024sepoa, mu2025clarify}, which significantly undermines their effectiveness. 
Since feedback is sparse in our work, and the pretraining phase involves numerous potential tasks, the issue of indistinguishability is further exacerbated. 
\method adopts discrete ratings for single segments, similar to \citet{akrour2014programming} and \citet{sugiyama2012preference}, to avoid the indistinguishability problem inherent in pairwise comparisons. 
Additionally, unlike standard approaches that require training parameterized reward models, \method proposes a training-free method that efficiently utilizes sparse feedback by leveraging the semantic coherence of the unsupervised skill latent space.

\section{Experimental Details}
\label{app:exp}

\begin{figure}[ht]
    \centering
    \includegraphics[width=1.0\linewidth]{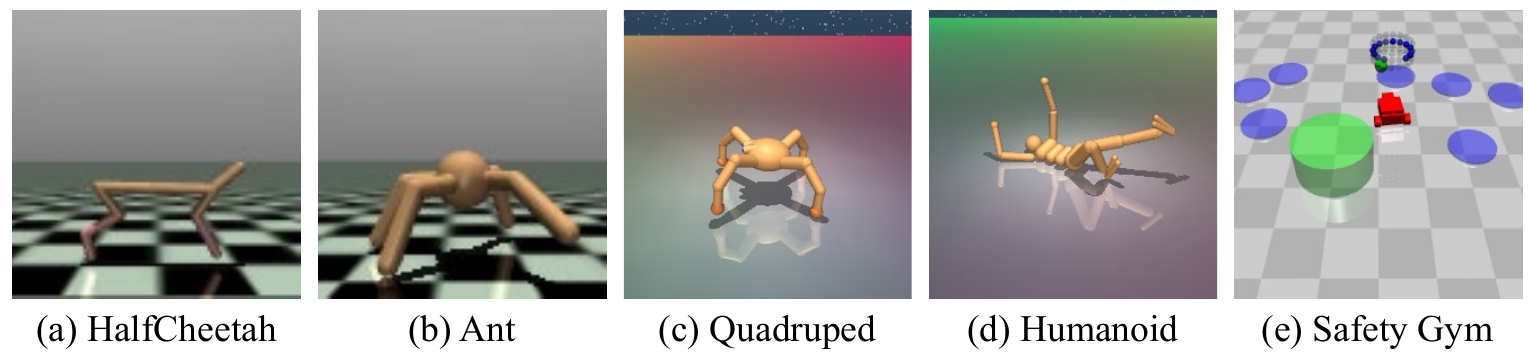}
    \caption{
    Benchmark environments.
    }
    \label{fig:envs}
\end{figure}

\subsection{Setup}
\label{app:exp:setup}

\paragraph{Environments.}

We evaluate \method on five complex robotic locomotion environments: state-based Ant and HalfCheetah from OpenAI Gym \cite{todorov2012mujoco,brockman2016openai}, pixel-based Quadruped and Humanoid from DMControl \cite{tassa2018dmcontrol}, and state-based Safety Gym \cite{ray2019benchmarking}, 
as illustrated in Fig. \ref{fig:envs}.
In pixel-based DMControl environments, we follow prior works \cite{park2022lipschitz, park2023metra, kim2024dodont} by using colored floors, allowing the agent to infer its location from pixel observations. 
The observation is $64\times64$ RGB images of the scene for these pixel-based environments.
For Safety Gym \cite{ray2019benchmarking}, we use a customized \texttt{Safexp-CarGoal1-v0} environment, where a car must navigate to a goal while avoiding hazards. 
To ensure consistency within a single experiment, the locations of hazards are randomly generated at the start of each experiment but remain fixed throughout its duration. 

\paragraph{Guidance task designs.}

To assess \method's alignment with human intent, we design tasks with varying guidance types: 
\begin{itemize}
    \item \textbf{Direction guidance}, where the agent moves towards a specific direction (\textit{North} and \textit{Right}).
    \item \textbf{Range guidance}, where the agent explores within a range (\textit{Range}). 
    \item \textbf{Hazard avoidance}, where the agent avoids hazardous areas (\textit{Hole} and \textit{Hazard}). 
    \item \textbf{Unsafe behaviour avoidance}, where the agent avoids unsafe actions (\textit{Not-Flip}). 
    \item \textbf{Composite tasks}, which combine multiple guidance types, requiring hazard avoidance while encouraging directional movement (\textit{Range-North} and \textit{Hole-North}). 
\end{itemize}
Tasks are illustrated in Fig. \ref{fig:main_visualization}.
The oracle guidance signals for states are specified as follows:
\begin{itemize}
    \item \textbf{North} (for Ant and Quadruped environments): A state is considered bad if the location $(x,y)$ does not satisfy $y\ge |x|$.
    \item \textbf{Right} (for HalfCheetah environment): A state is considered bad if the location $x$ does not satisfy $x \ge 0$.
    \item \textbf{Range} (for Ant environment): There is a safe area defined as a circle with its center at $(0,5)$ and radius $r=20$. A state is considered bad if the agent is outside this safe area.
    \item \textbf{Hole} (for Ant and Humanoid environment): There are four holes in the scene. A state is considered bad if the agent is located in any of these holes:
    For Ant environment, the holes are circles with centers at $(0,20)$, $(0,-20)$, $(20,0)$, $(-20,0)$, all with a common radius $r=12$.  
    For Humanoid environment, the holes are circles with centers at $(0,8)$, $(0,-8)$, $(8,0)$, $(-8,0)$, all with a common radius $r=4$.  
    \item \textbf{Hazard} (for Safety-Gym environment): A state is considered bad if the agent is in either hazard area. The locations of hazards are randomly generated at the start of each experiment but remain fixed throughout its duration. 
    \item \textbf{Not-Flip} (for HalfCheetah environment): A state is considered bad if the agent flips. Specifically, the agent is said to have flipped when the absolute value of its pitch angle exceeds 90 degrees. %
    \item \textbf{Range-North} (for Ant environment): There is a safe area defined as a circle with center at $(0,5)$ and radius $r=20$. A state is considered bad if the agent is outside this safe area. Additionally, if the agent is in the safe area and the location $(x,y)$ satisfies $y\ge |x|$, then the state is considered good.
    \item \textbf{Hole-North} (for Ant environment): There are four holes in the scene, defined as circles with centers at $(15,15)$, $(15,-15)$, $(-15,-15)$, $(-15,15)$, and a common radius $r=12$. A state is considered bad if the agent is in either hole.
    Additionally, if the agent is not in any of the holes and the location $(x,y)$ satisfies $y\ge |x|$, then the state is considered good.
\end{itemize}

\paragraph{Metrics.}
We employ three main metrics for evaluation: 
\begin{itemize}
    \item \textbf{Safe state coverage}, which measures the agent's ability to explore the state space while avoiding hazardous regions. 
    Following \cite{kim2024dodont}, this metric assigns a value $+1, -1$ to safe and unsafe areas, and computes state coverage by counting the unique 1$\times$1 x-y bins (or 1-unit x-axis bins for HalfCheetah tasks) visited by the agent. 
    For Safety-Gym tasks, we set the x-y bin size to $0.01\times0.01$ due to the small scale of the coordinates.
    \item \textbf{Safe state ratio}, which quantifies the proportion of visited safe bins among all visited bins.
    It is defined as the ratio of the number of unique safe bins (labeled as good or neutral) to the number of all visited bins.
    \item \textbf{Downstream task performance}, which evaluates the utility of learned skills in downstream tasks. We consider both a zero-shot setting and a task-specific hierarchical control setting.
    To assess zero-shot performance, we roll out the downstream task environment with randomly sampled skills and report both the average and the best performance across all sampled skills.
    To assess hierarchical control performance, we use the learned skills as a low-level controller and train an additional high-level controller to optimize performance on the downstream task. The downstream task performance is then reported as the hierarchical control performance of the pretrained skills.
    Further details on the downstream tasks can be found in Appendix \ref{app:downstream}, while additional information about the high-level controller is provided in Appendix \ref{app:implement}.
\end{itemize}

\subsection{Downstream Task Details}
\label{app:downstream}

\paragraph{Zero-shot performance.} 
We evaluate the zero-shot performance of the learned skills (in Ant tasks) on a customized Ant motion task. The single-step reward comprises a survival reward ($1.0$ per step), a movement reward (the maximum of the forward velocity and the lateral velocity), and a safety penalty ($-20.0$ if the agent enters the unsafe area defined by the guidance task used during skill learning).

\paragraph{Hierarchical control performance.} 
We evaluate the hierarchical control performance of the learned skills (in HalfCheetah Not-flip tasks) on a HalfCheetah Goal task.
The agent will receive a reward of $1.0$ if it is sufficiently close to the goal (i.e., within a distance of less than 3), and a safety penalty of $-20.0$ if the agent flips. The goals are randomly sampled from the range $[-100, 100]$.

\subsection{Implementation Details}
\label{app:implement}

We implement METRA on top of the publicly available CSF codebase\footnote{https://github.com/Princeton-RL/contrastive-successor-features} \cite{zheng2025can}, as it provides more detailed scripts and supports the evaluation of downstream task performance. 
Code is available at \url{https://github.com/iiiiii11/COLLIE}.

For the baselines, we adopt the implementation of METRA \cite{park2023metra}, DIAYN \cite{eysenbach2019diversity}, and LSD \cite{park2022lipschitz} from the CSF codebase, and implement online variants of DoDont (DoDont$^*$) and DDG (DDG$^*$) on top of the CSF codebase.

For DoDont$^*$, to adapt the original DoDont \cite{kim2024dodont} to our setting, we make two necessary modifications.
First, since no offline datasets are available, the dataset for classifier training is collected along the skill discovery process. Specifically, every 1000 (state-based) or 1500 (pixel-based) epochs, we sample two trajectories from the replay buffer to construct the dataset. The dataset is of the same scale as the original DoDont.
Second, as the dataset must be collected gradually, the classifier can only be trained incrementally. Specifically, each time the feedback buffer is updated, we fine-tune the classifier using the accumulated dataset collected up to that point.
For DDG$^*$, we apply the same modifications.
Additionally, since DDG is built on DIAYN, which significantly underperforms METRA, we implement DDG$^*$ on top of METRA for a cleaner comparison of its guidance mechanism.

\method shares the same hyperparameters as the baselines, which are consistent with METRA. We list these hyperparameters in Table \ref{table:param-usd}.

\begin{table}[ht]
    \caption{Common hyperparameters for unsupervised skill discovery methods.}
    \label{table:param-usd}
    \begin{center}
    \begin{tabular}{ll}
        \toprule
        Hyperparameter & Value \\
        \midrule
        Encoder for pixel tasks & CNN~\citep{lecun1989cnn} \\
        \# hidden layers & $2$ \\
        \# hidden units per layer & $1024$ \\
        
        Learning rate & $0.0001$ \\
        Optimizer & Adam~\citep{kingma2014adam} \\

        Minibatch size & $256$ \\
        Target network smoothing coefficient & $0.995$ \\
        Entropy coefficient & auto-adjust~\citep{haarnoja2018sac} \\

        Total horizon length & 200 \\
        \# epoches & $5000$ (Quadruped, Humanoid), \\ 
                      & $10000$ (Ant, HalfCheetah, Safety-Gym) \\
        \# episodes per epoch & $8$ \\
        \# gradient steps per epoch & $200$ (Quadruped, Humanoid), \\ 
                      & $50$ (Ant, HalfCheetah, Safety-Gym) \\

        Discount factor $\gamma$ & $0.99$ \\
               
        METRA $\epsilon$ & $10^{-3}$ \\
        METRA initial $\lambda$ & $30$ \\
        \bottomrule
    \end{tabular}
    \end{center}
\end{table}

\newpage

The additional hyperparameters of \method are listed in Table \ref{table:param-ours}.

\begin{table}[ht]
    \caption{Additional hyperparameters for \method.}
    \label{table:param-ours}
    \begin{center}
    \begin{tabular}{ll}
        \toprule
        Hyperparameter & Value \\
        \midrule
        Segment length $H$ & 20 \\
        Feedback frequency $K$ & 1000 \\
        Warm-up epochs before the first feedback & 2000 \\
        The total feedback amount $N_\text{total}$ & 40 (Ant, HalfCheetah, Safety-Gym) \\
        & 100 (Quadruped, Humanoid) \\
        The feedback amount per session $M$ & 5 \\ 
        Smoothing factor $k_\beta$ & 5 \\ 
        Number of candidate queries $N_\text{c}$ & 50 \\ 
        Number of nearest neighbors $k$ & 5 \\ 
        \bottomrule
    \end{tabular}
    \end{center}
\end{table}

For hierarchical control tasks, we use a PPO \cite{schulman2017proximal} agent as the high-level controller. 
The trained skills serve as the low-level controller, and their parameters are fixed during the training of the hierarchical control agent.
The hyperparameters are shown in Table \ref{table:hyp_high}.

\begin{table}[ht]
    \caption{Hyperparameters for high-level controllers. }
    \label{table:hyp_high}
    \begin{center}
    \begin{tabular}{ll|ll}
        \toprule
        Hyperparameter & Value & Hyperparameter & Value \\
        \midrule
        Learning rate & 0.0001 &
        Option timesteps length & 25 \\
        Total horizon length & 200 &
        Replay buffer batch size & 256 \\
        \# hidden layers & $2$ &
        \# hidden units per layer & $1024$ \\
        Temperature $\alpha$ & 1 &
        \# epoches & 9000 \\
        \# episodes per epoch & $8$ \\
        \bottomrule
    \end{tabular}
    \end{center}
\end{table}

\end{document}